\let\savedVert\|

\makeatletter
\def\cup@reference@code{%
  \RequirePackage[style=numeric,sorting=none,backend=biber]{biblatex}%
  \renewcommand*{\bibfont}{\footnotesize}%
}
\makeatother

\documentclass[
  journal=largetwo,
  manuscript=article,
  manuscriptlabel={Research Preprint},
  logo=false,
  simfonts=false,
  year=2026,
  volume=1,
]{cup-journal}

\let\|\savedVert

\usepackage[utf8]{inputenc}
\usepackage[T1]{fontenc}
\usepackage{amsmath,amssymb,amsthm}
\usepackage{mathtools}
  
\usepackage[varg]{newtxmath}
\DeclareMathSizes{10.95}{9.95}{7}{5}
\usepackage{graphicx} 
\usepackage{booktabs}
\usepackage{enumitem}
\setlist{nosep,leftmargin=*}
\usepackage{algorithm}
\usepackage{algpseudocode}
\usepackage{xcolor}
\usepackage{microtype}
\usepackage{tikz}
\usetikzlibrary{shapes,arrows.meta,positioning,calc,decorations.pathreplacing}
\usepackage[colorlinks=true,linkcolor=blue,citecolor=blue]{hyperref}
\usepackage{cleveref}
\usepackage{orcidlink}

\usepackage[section]{placeins}
\usepackage{stfloats}  

\usepackage{etoolbox}
\AtBeginEnvironment{table}{\footnotesize\setlength{\tabcolsep}{3pt}}

\definecolor{primary}{HTML}{006BA2}
\definecolor{accent}{HTML}{E3120B}
\definecolor{neutral}{HTML}{4A4A4A}

\definecolor{stable}{HTML}{2E7D32}
\definecolor{unstable}{HTML}{E3120B}
\definecolor{warning}{HTML}{F57C00}

\definecolor{boxfill}{HTML}{E8F4FD}
\definecolor{boxborder}{HTML}{006BA2}

\newtheorem{theorem}{Theorem}[section]
\newtheorem{lemma}[theorem]{Lemma}
\newtheorem{proposition}[theorem]{Proposition}

\theoremstyle{definition}
\newtheorem{definition}[theorem]{Definition}

\theoremstyle{remark}

\appto\normalsize{%
  \setlength{\abovedisplayskip}{6pt plus 2pt minus 2pt}%
  \setlength{\belowdisplayskip}{6pt plus 2pt minus 2pt}%
  \setlength{\abovedisplayshortskip}{2pt plus 1pt}%
  \setlength{\belowdisplayshortskip}{2pt plus 1pt}%
}
\normalsize
\AtBeginEnvironment{equation}{%
  \abovedisplayskip=6pt plus 2pt minus 2pt
  \belowdisplayskip=6pt plus 2pt minus 2pt
  \abovedisplayshortskip=2pt plus 1pt
  \belowdisplayshortskip=2pt plus 1pt
}

\makeatletter
\def\cup@journal@name{Convergence Radius}
\def\cup@manuscript{preprint}
\def\cup@year{February 2026}
\makeatother

\title{Ghosts of Softmax: Complex Singularities That Limit Safe Step Sizes in Cross-Entropy}
\author{Piyush Sao\,\orcidlink{0000-0002-9432-5855}}
\affiliation{Oak Ridge National Laboratory, Oak Ridge, TN, USA}
\email{saopk@ornl.gov}
\makeatletter\def\cup@contact@details{}\makeatother
\keywords{cross-entropy; convergence radius; learning rate;
  partition-function zeros; training stability}

\begin{document}

\begin{abstract}
Optimization analyses for cross-entropy training rely on local Taylor models of
the loss: linear or quadratic surrogates used to predict whether a proposed step
will decrease the objective. These surrogates are reliable only inside the
Taylor convergence radius of the true loss along the update direction. That
radius is set not by real-line curvature alone but by the nearest complex
singularity. For cross-entropy, the softmax partition function
$F=\sum_j \exp(z_j)$ has complex zeros---``ghosts of softmax''---that induce
logarithmic singularities in the loss and cap this radius. Beyond it, local
Taylor models need not track the true loss, so descent guarantees based on
those models become unreliable.

This yields a fundamentally different constraint from $L$-smoothness. To make
the geometry explicit and usable, we derive closed-form expressions under logit
linearization along the proposed update direction. In the binary case, the
exact radius is $\rho^*=\sqrt{\delta^2+\pi^2}/\Delta_a$. In the multiclass
case, we obtain the interpretable lower bound $\rho_a=\pi/\Delta_a$, where
$\Delta_a=\max_k a_k-\min_k a_k$ is the spread of directional logit derivatives
$a_k=\nabla z_k\cdot v$. This bound costs one Jacobian--vector product and
reveals what makes a step fragile: samples that are both near a decision flip
and highly sensitive to the proposed direction tighten the radius.

The normalized step size $r=\tau/\rho_a$ separates safe from dangerous updates.
Across six tested architectures and multiple step directions, no model fails for
$r<1$, yet collapse appears once $r\ge 1$. Temperature scaling confirms the
mechanism: normalizing by $\rho_a$ shrinks the onset-threshold spread from
standard deviation $0.992$ to $0.164$. A controller that enforces
$\tau\le\rho_a$ survives learning-rate spikes up to $10{,}000\times$ in our
tests, where gradient clipping still collapses. As a proof of concept, a
controller that sets $\eta=r\,\rho_a/\|v\|$ from local geometry alone reaches
$85.3\%$ accuracy on ResNet-18/CIFAR-10 without a hand-designed learning-rate
schedule (best fixed rate: $82.6\%$). Together, these results identify a
geometric constraint on cross-entropy optimization---one that operates through
Taylor convergence rather than Hessian curvature---and provide a tractable,
optimizer-agnostic bound that makes it visible.

\end{abstract}

\noindent\footnotesize\textbf{Reproducibility resources.}
Accompanying code, notebooks, and tutorials will be available at
\href{https://github.com/piyush314/ghosts-of-softmax}{github.com/piyush314/ghosts-of-softmax}.\par\normalsize

{%
\renewcommand{\thefootnote}{\fnsymbol{footnote}}%
\footnotetext{\noindent\rule{0.8\columnwidth}{0.4pt}\\[2pt]%
This manuscript has been authored by UT-Battelle,
  LLC under Contract No.\ DE-AC05-00OR22725 with the
  U.S.\ Department of Energy. The publisher, by accepting the
  article for publication, acknowledges that the United States
  Government retains a non-exclusive, paid-up, irrevocable,
  world-wide license to publish or reproduce the published form
  of this manuscript, or allow others to do so, for United States
  Government purposes. The Department of Energy will provide
  public access to these results of federally sponsored research
  in accordance with the DOE Public Access Plan
  (\url{http://energy.gov/downloads/doe-public-access-plan}).}%
}

\section{Introduction}
\label{sec:intro}

\begin{figure}[!t]
\centering
\includegraphics[width=\columnwidth]{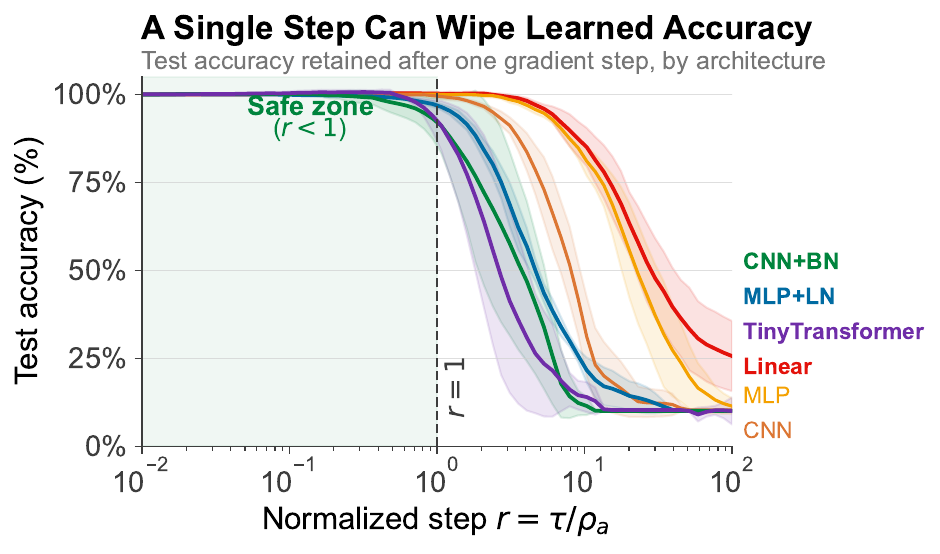}
\caption{
We train six small architectures to convergence, then take one step at varying
ratios $r = \tau/\rho_a$, where $\tau$ is the step distance and
$\rho_a$ is our stability bound (defined in the text).
Every network keeps its accuracy for $r < 1$;
once $r$ crosses 1, collapse appears across all six architectures.}
\label{fig:phase}
\end{figure}

Cross-entropy training works well but can fail suddenly. Loss spikes have been
reported in large-scale runs~\cite{chowdhery2022palm, touvron2023llama}, and a
learning rate that works early can diverge later. Practitioners mitigate these
spikes with heuristics:
restarting from checkpoints~\cite{chowdhery2022palm},
manual intervention~\cite{touvron2023llama},
gradient clipping~\cite{pascanu2013difficulty,zhang2020adaptive},
learning rate warmup and decay~\cite{loshchilov2017sgdr},
or temperature scaling~\cite{szegedy2016rethinking,pereyra2017regularizing}.

These heuristics help, but they treat symptoms, not causes. Clipping limits
gradient magnitude, not step safety. Learning rate scheduling reduces the rate
globally, slowing training everywhere. Standard $L$-smoothness theory misses
the key case: it assumes the gradient is Lipschitz with constant~$L$ and, for
$\eta < 2/L$, guarantees descent. Yet for cross-entropy this bound hides a
late-training failure mode. Figure~\ref{fig:phase} shows the result: at
roughly twice the later-defined bound $\rho_a$, accuracy collapses in one
update.

We identify a geometric mechanism that existing analyses miss. Many descent analyses justify an update by studying a local Taylor
polynomial of the loss. They treat this polynomial as a proxy for the true
objective and show that it decreases.
This reasoning is reliable only if the Taylor series converges at the update
point. By Cauchy--Hadamard (Section~\ref{sec:prelim}), convergence holds
inside the disk set by the nearest complex singularity. Inside the disk, the
truncated series tracks the function, so descent results transfer. Outside
the disk, the series can diverge from the function, so proving that the
series decreases no longer proves that the function decreases.

For cross-entropy loss, a key mechanism is the softmax partition function.
Along the step direction, the loss equals
$\log\!\bigl(\sum_k \exp z_k(\theta + \tau v)\bigr)$.
This sum stays positive for real~$\tau$ but has zeros in the complex plane.
Those zeros induce off-axis logarithmic singularities---``ghosts''---that
still cap the convergence radius.
As training sharpens predictions, the logit-derivative spread $\Delta_a$
often grows, moving ghosts toward the origin and shrinking the safe step
size. This tightening occurs even when $L$-smoothness permits larger
nominal steps.

For softmax
cross-entropy, linearizing the logits along the step direction yields the
closed-form lower bound $\rho_a = \pi/\Delta_a$, where
$\Delta_a = \max_k a_k - \min_k a_k$ is the logit-derivative spread
for $a_k = \nabla z_k \cdot v$.
Linearization is not needed for the mechanism; it is needed for a cheap and
interpretable bound. Without it, one can use numerical methods to locate the
nearest complex singularity. This first-order proxy is useful not only because it is computable (one JVP),
but because it shows which quantity matters: $\Delta_a$.
For binary cross-entropy the exact radius is
\begin{equation*}
\rho^* = \sqrt{\delta^2 + \pi^2}\;\big/\;\Delta_a,
\end{equation*}
where $\delta$ is the logit gap between classes; the worst case is
$\delta = 0$, giving $\rho_a$.
During training, $\Delta_a$ grows along the gradient direction,
so $\rho_a$ shrinks and a fixed learning rate eventually violates $r < 1$.

Define the normalized step size $r = \tau/\rho_a$, where
$\tau = \|p\|$ is the step distance.
When $\rho_a$ is recomputed along $v$ at each step,
$r < 1$ guarantees Taylor convergence for any optimizer direction
when logits are approximately linear over the step.
We confirm this direction-independence empirically: sweeping $r$ along the gradient
and 20 random directions yields the same transition at $r \approx 1$
(Section~\ref{sec:control}, Figure~\ref{fig:random}).
The bound is a reliability boundary, not a cliff:
training may survive $r > 1$ when cancellation is favorable,
but without a guarantee.

Our argument has three layers. First, a geometric fact: the real loss
$\ell(\tau) = \mathcal{L}(f(\theta + \tau v))$ is analytic in a complex
neighborhood of the real line, and its Taylor convergence radius is set by the
nearest complex singularity---not by real curvature. This holds for any
parameterization, independently of logit linearity.
Second, a tractable specialization: linearizing the logits along the step
direction isolates the softmax singularity structure, yielding the explicit
bound $\rho_a = \pi/\Delta_a$. Linearization makes the obstruction computable and interpretable.
Third, empirical evidence: this conservative lower bound is predictive enough
to diagnose, explain, and prevent instability across the settings we test.

\subsection*{Contributions}

\begin{enumerate}
\item \textbf{A geometric constraint on optimization
      (Sections~\ref{sec:prelim}--\ref{sec:radius}).}
      The real loss $\ell(\tau) = \mathcal{L}(f(\theta + \tau v))$ along any
      update direction~$v$ is analytic, and its Taylor convergence radius is
      set by the nearest complex singularity of~$\ell$.
      For cross-entropy, the softmax partition function
      $\sum_k \exp z_k(\tau)$ has complex zeros that induce logarithmic
      singularities in the loss. Beyond this radius, polynomial models of
      the loss can diverge from the function itself, so descent guarantees
      built on those models lose validity. This constraint operates through
      Taylor convergence geometry, not Hessian curvature, and tightens as
      predictions sharpen---even when the real loss surface appears
      increasingly flat.

\item \textbf{A tractable lower bound
      (Theorems~\ref{thm:binary}--\ref{thm:general}).}
      Linearizing the logits isolates the softmax singularity structure
      and yields $\rho_a = \pi/\Delta_a$, computable via one
      Jacobian--vector product. For binary cross-entropy the exact radius
      is $\rho^* = \sqrt{\delta^2 + \pi^2}/\Delta_a$.
      Linearization makes the softmax obstruction computable
      by revealing the controlling variable~$\Delta_a$.
      A separate proof via real-variable KL divergence bounds
      confirms the same $O(1/\Delta_a)$ scaling (\cref{app:kl}).

\item \textbf{Empirical validation
      (Sections~\ref{sec:predict}--\ref{sec:control}).}
      Across six architectures and multiple update directions,
      the normalized step $r = \tau/\rho_a$ separates safe from
      failing updates: no tested model fails for $r < 1$.
      Temperature scaling collapses the spread of failure-onset
      thresholds across architectures from
      $\sigma = 0.992$ to $0.164$.
      As a proof of concept, a controller that caps $\tau \le \rho_a$
      survives $10{,}000\times$ learning-rate spikes where gradient
      clipping collapses, and on ResNet-18/CIFAR-10 reaches $85.3\%$
      without a hand-designed schedule (best fixed rate: $82.6\%$).
\end{enumerate}

This paper establishes the one-step singularity constraint for softmax
and provides a tractable bound. Multi-step dynamics, activation
singularities, and computational optimization are natural extensions
that build on this foundation (\cref{sec:activations,sec:discuss}).

Section~\ref{sec:prelim} reviews the complex-analytic foundations.
Section~\ref{sec:setup} formulates the problem.
Section~\ref{sec:radius} derives the convergence radius.
Sections~\ref{sec:predict}--\ref{sec:control} apply and validate the
framework. Section~\ref{sec:discuss} discusses implications and limitations.

\section{Preliminaries}
\label{sec:prelim}

\begin{figure}[t]
\centering
\includegraphics[width=\columnwidth]{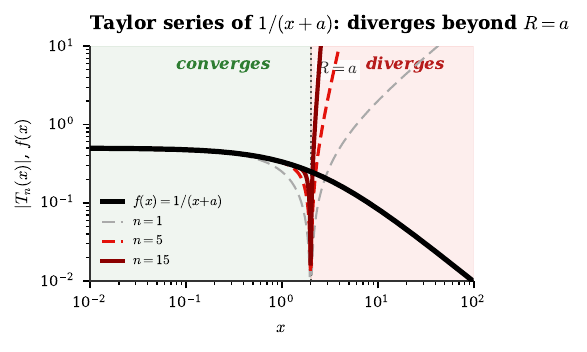}
\caption{Taylor approximations $T_n(x)$ of $f(x) = 1/(x+a)$ around $x_0 = 0$.
Inside the convergence radius $R = a$ (green), all orders approximate $f$ well.
Beyond $R$ (red), higher-order approximations diverge faster, not slower.}
\label{fig:taylor}
\end{figure}

\begin{figure}[t]
\centering
\includegraphics[width=\columnwidth]{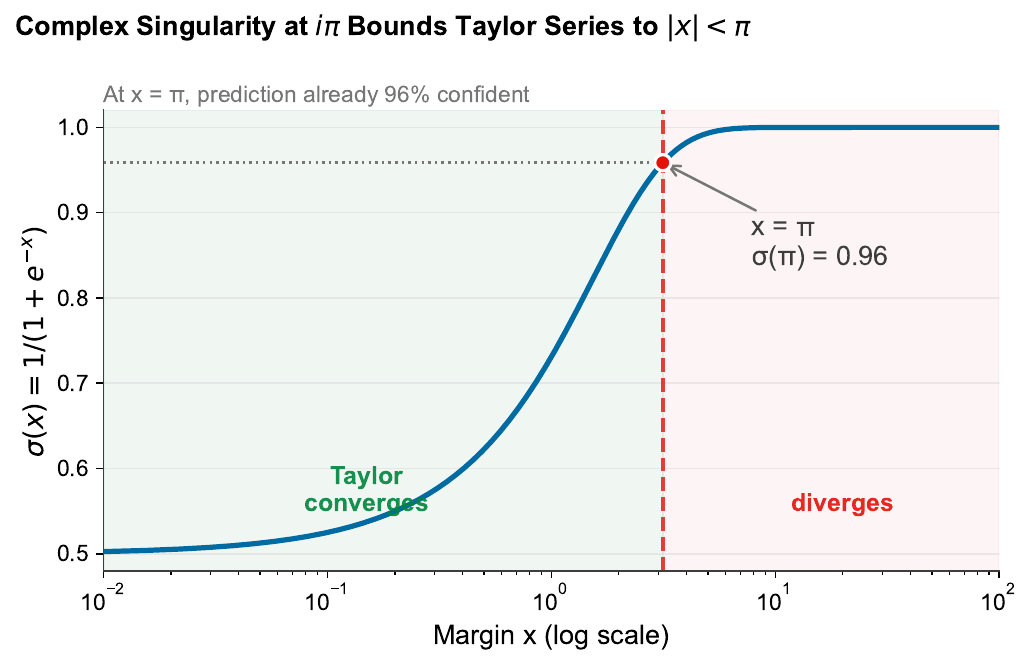}
\caption{Binary cross-entropy's log-partition $\log(1 + e^x)$ has convergence
radius $\rho = \pi$ set by the complex zero at $i\pi$ (Euler:
$1 + e^{i\pi} = 0$). Shown: derivative $\sigma(x)$ on log scale. Green:
Taylor converges ($|x| < \pi$). Red: diverges ($|x| > \pi$).}
\label{fig:sigmoid}
\end{figure}

This section develops a central insight used throughout the paper: many
optimization analyses and heuristics rely on local Taylor models, but those
models are reliable only within a convergence region that simple quadratic
approximations do not capture. We introduce concepts from complex analysis to
characterize this region and show how it constrains optimization.

\subsection{Analytic Functions and Taylor Series}

A function $f: \mathbb{R} \to \mathbb{R}$ is \emph{analytic} at $x_0$ if it
equals its Taylor series in some neighborhood:
\begin{equation}
f(x) = \sum_{n=0}^{\infty} \frac{f^{(n)}(x_0)}{n!}(x - x_0)^n, \quad |x - x_0| < R
\end{equation}
for some radius $R > 0$, and truncating at order $n$ then yields an
approximation with controlled error. This justifies finite-derivative models
such as linear ($n=1$) and quadratic ($n=2$, as in $L$-smoothness)
approximations.

\subsection{A Simple Example: \texorpdfstring{$f(x) = 1/(x+a)$}{f(x) = 1/(x+a)}}

To see how the convergence radius limits Taylor approximations, consider
$f(x)=1/(x+a)$ for $a>0$. On the positive reals, $f$ is smooth, bounded,
and monotone; its singularity lies at $x=-a$ on the negative half-line.
The Taylor series around $x_0=0$ is:
\begin{equation}
\label{eq:geom}
f(x) = \frac{1}{a} \sum_{n=0}^{\infty} \left(-\frac{x}{a}\right)^n
     = \frac{1}{a} - \frac{x}{a^2} + \frac{x^2}{a^3} - \cdots
\end{equation}
This converges only for $|x|<a$. Yet at $x=2a$---outside this
range---the function is perfectly well-defined: $f(2a)=1/(3a)$.
The Taylor series diverges; partial sums oscillate and grow in
magnitude. In fact, no finite truncation approximates $f(2a)$, and the
error can grow with truncation order (Figure~\ref{fig:taylor}).

\subsubsection{When does this divergence occur?}
Why does the singularity at $x=-a$ affect the series at $x=2a$? Along
the real line, the singularity seems far away and the function is smooth
in between.
The answer lies in the complex plane.

\begin{theorem}[Cauchy--Hadamard]
The Taylor series of $f$ around $x_0$ converges for $|x - x_0| < R$,
where $R$ is the distance from $x_0$ to the nearest point where $f$
fails to be analytic (a singularity)~\cite{conway1978functions}.
The series diverges for $|x - x_0| > R$.
\end{theorem}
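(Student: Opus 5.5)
The plan is to work in the complex plane from the start. We regard $f$ as a holomorphic function on a domain $U\subset\mathbb{C}$ containing $x_0$, and set $R=\sup\{r: f \text{ extends holomorphically to } D(x_0,r)\}$, which is the distance to the nearest singularity. The statement splits into two halves: convergence for $|x-x_0|<R$ and divergence for $|x-x_0|>R$. Each half comes from comparing $R$ with the radius $R_c = 1/\limsup_n |c_n|^{1/n}$, where $c_n = f^{(n)}(x_0)/n!$. The classical Cauchy--Hadamard formula, proved by the root test, says the power series $\sum c_n (x-x_0)^n$ converges absolutely for $|x-x_0|<R_c$ and diverges for $|x-x_0|>R_c$, since there its terms do not tend to zero. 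So it suffices to prove $R_c = R$, and that the sum equals $f$ inside.

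\textbf{Step 1: $R_c \ge R$, and the sum equals $f$.} Fix any $r<R$. On the circle $|w-x_0|=r$, $f$ is holomorphic, so the Cauchy integral formula gives
\[
c_n = \frac{1}{2\pi i}\oint_{|w-x_0|=r} \frac{f(w)}{(w-x_0)^{n+1}}\,dw .
\]
This yields the Cauchy estimate $|c_n|\le M_r/r^n$ with $M_r=\max_{|w-x_0|=r}|f|$. Hence $\limsup|c_n|^{1/n}\le 1/r$, and letting $r\to R$ gives $R_c\ge R$. To identify the sum with $f$, I expand $1/(w-x)$ as a geometric series in $(x-x_0)/(w-x_0)$, exactly as in \eqref{eq:geom}. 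This series converges uniformly on the circle for $|x-x_0|<r$, so it can be integrated term by term, and the result reproduces $f(x)$.

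\textbf{Step 2: $R_c\le R$.} This is the step I expect to be the main obstacle, because it needs an argument by contradiction rather than an estimate. Suppose $R_c>R$. Then the power series defines a holomorphic function $g$ on $D(x_0,R_c)$, and $g$ agrees with $f$ on $D(x_0,R)$ by Step 1. By the identity theorem, $g$ is a holomorphic continuation of $f$ to a strictly larger disk. This contradicts the definition of $R$ as the distance to the nearest point where $f$ fails to be analytic, which is a point that admits no analytic continuation.

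I would add one remark about this definition. ``Singularity'' has to mean a non-removable obstruction on the boundary circle, such as a pole, a branch point, or a log singularity like the zeros of $1+e^{x}$ at $i\pi$. Under that reading, the contradiction is exactly the content of the theorem. Combining Steps 1 and 2 gives $R_c=R$. The root test then supplies both the convergence and the divergence claims, the latter because $|c_n||x-x_0|^n\not\to 0$ when $|x-x_0|>R$.
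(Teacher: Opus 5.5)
The paper gives no proof of this statement. It quotes the result as classical with a pointer to Conway, and its ``Cauchy--Hadamard'' label covers two facts: the root-test formula $R_c=1/\limsup_n|c_n|^{1/n}$, and the identification of $R_c$ with the distance to the nearest singularity (the Cauchy--Taylor part). Your proposal is a correct, self-contained version of the textbook argument behind that citation, and it keeps those two ingredients cleanly separate.

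What your version adds over the paper's informal phrasing is a precise $R$: the supremal radius of a disk to which the germ of $f$ at $x_0$ extends holomorphically. This matters. Read literally as ``the nearest point where a given $f$ fails to be analytic,'' the divergence claim fails. For example, the principal branch of $\log z$ expanded at $x_0=-1+i$ is non-analytic on its cut, whose nearest point $-1$ is at distance $1$. Yet its Taylor series converges out to radius $\sqrt{2}$, which is set by the branch point $0$.

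Two calibration remarks. First, with your definition Step~2 is the easy half, not the main obstacle. The series is itself a holomorphic extension to $D(x_0,R_c)$, so $R_c\le R$ follows immediately from the definition of the supremum. Step~1 already gives agreement with $f$, so the identity theorem is not needed. The real content sits in your phrase ``which is the distance to the nearest singularity.'' If singular points are defined pointwise, equating their distance with your supremum requires the compactness lemma that a power series has a singular point on its circle of convergence.

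Second, for the paper's actual use of the theorem (Proposition~\ref{prop:radius}, $\log F$ with $F$ entire), your definition suffices without that lemma. $\log F$ has a holomorphic branch on any zero-free disk. Conversely, no extension $g$ can exist on a disk containing a zero $t_0$ of $F$: the identity theorem would force $e^{g}=F$ on the whole disk, while $e^{g}$ never vanishes. So your $R$ equals $\min\{|t|:F(t)=0\}$ directly.
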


For $f(x)=1/(x+a)$, the complex extension $f(z)=1/(z+a)$ has a pole at
$z=-a$. From the expansion point $z_0=0$, this pole lies at distance $a$, so
the convergence disk has radius $a$. The radius is determined by the complex
singularity, not by the function's behavior on the real line.

\subsubsection{Implications for neural networks}
This analyticity principle has practical consequences. The binary
log-partition $\log(1 + e^x)$ is smooth on $\mathbb{R}$ and has no real
singularities, yet its Taylor series diverges for $|x| > \pi$. The reason lies
in the complex plane: Euler's identity gives $1 + e^{i\pi} = 0$, so the
function vanishes at $x = i\pi$, creating a branch point of the logarithm.
This purely imaginary singularity limits the convergence radius to $\pi$
(Figure~\ref{fig:sigmoid}).

\paragraph{Key insight.}
A singularity need not lie between the expansion and evaluation points, nor
must it lie on the real line. If a singularity lies within distance $R$ of the
expansion point in the complex plane, the Taylor series diverges beyond $R$ in
every direction. This leads to the optimization question: when does an update
step exceed the convergence radius?

\subsection{Optimization and the Convergence Radius}

Any analysis that extrapolates local derivatives to predict finite-step
behavior faces the same limitation. We now make it precise for directional
steps.

\subsubsection{Taylor expansion of the update}
Let $f: \mathbb{R}^n \to \mathbb{R}$ be analytic, with gradient
$g = \nabla f(x)$ and Hessian $H = \nabla^2 f(x)$ at the current
iterate $x$. An optimizer chooses a step $p$ (e.g.\ $p = -\eta g$
for gradient descent, or a preconditioned direction for Adam).
The Taylor expansion around $x$ for any step $p$ is:
\begin{equation}
f(x + p) = f(x) + g^\top p + \tfrac{1}{2} p^\top H p + O(\|p\|^3).
\end{equation}
For gradient descent ($p = -\eta g$) this becomes:
\begin{equation}
\label{eq:gd-taylor}
f(x - \eta g) = f(x) - \eta \|g\|^2
    + \tfrac{1}{2} \eta^2 g^\top H g + O(\eta^3 \|g\|^3).
\end{equation}
The first-order term $-\eta \|g\|^2$ suggests descent. The second-order term
captures curvature along $g$, explaining why large $\eta$ can fail. But the
key constraint applies to any step $p$: the expansion is valid only when it
converges.

\subsubsection{The implicit assumption}
Equation~\eqref{eq:gd-taylor} gives a local expansion. To use this
approximation at a finite step, we must control higher-order terms. For
analytic $h(t)=f(x+t u)$ with $u=p/\|p\|$, let $\rho$ be the convergence
radius of the Taylor series around $t=0$. By Cauchy--Hadamard, series
convergence requires:
\begin{equation}
\label{eq:step-constraint}
\|p\| < \rho.
\end{equation}
For gradient descent ($p = -\eta g$), this yields
$\eta < \rho/\|g\|$ along direction $u=-g/\|g\|$. More generally:
\begin{equation}
\boxed{\|p\| < \rho}
\end{equation}
When the gradient is large (far from a stationary point), the permissible step
size shrinks. This is a separate mechanism from curvature-based bounds: the
local polynomial model itself becomes unreliable.

\subsubsection{Inside vs.\ outside the radius}
This unreliability is not gradual---the convergence radius $\rho$ creates a
sharp dichotomy:
\begin{itemize}
\item \textbf{Inside} ($\|p\| < \rho$): The Taylor series converges.
      Adding terms improves accuracy.
\item \textbf{Outside} ($\|p\| > \rho$): The Taylor series diverges.
      Adding terms need not improve the approximation and can make it
      \emph{worse} (Figure~\ref{fig:taylor}).
\end{itemize}
This is not a failure of finite precision or truncation error. It is a
fundamental limitation: once singularities limit convergence, derivatives at
$x$ do not fully encode behavior at the update point.

\subsubsection{Curvature versus convergence radius}
This fundamental limit differs from the familiar curvature-based bound.
Standard analysis uses the Lipschitz constant $L$ of the gradient to bound
step size, yielding $\eta < 2/L$. This curvature-based bound reflects how
rapidly the gradient changes. In contrast, the convergence radius $\rho$
imposes a separate, often stricter limit based on \emph{analyticity}: where
the Taylor representation breaks down entirely.

For functions like $f(x) = 1/(x+a)$, the Hessian $f''(x) = 2/(x+a)^3$ shrinks
as $x$ grows, suggesting that large steps are safe. Yet the convergence radius
remains $\rho = x + a$, forbidding steps that cross the singularity.
Curvature-based analysis misses this barrier; for cross-entropy, the mismatch
grows exponentially with confidence (Section~\ref{sec:predict}).

\subsubsection{Directional convergence radius}
So far we have treated the convergence radius as a scalar, but optimization
steps follow specific directions. Let $u$ be any unit direction
in parameter space. The \emph{directional convergence radius} along $u$ is:
\begin{equation}
\boxed{\rho_u = \sup\{\, t > 0 : \text{Taylor series converges at } x + tu
\,\}}
\end{equation}
This gives the constraint $\|p\| < \rho_u$ where $u = p/\|p\|$. Different
directions have different radii, determined by singularities in $f$'s complex
extension. The gradient direction maximizes first-order decrease, not
$\rho_u$, so these objectives may conflict.

\paragraph{What remains.} The preceding argument explains \emph{why} the
convergence radius matters: beyond it, no finite polynomial model guarantees
local fidelity. What remains is to \emph{compute} it. For cross-entropy, the
bottleneck is a log-partition zero at imaginary distance proportional to $\pi$
(Figure~\ref{fig:sigmoid}). The next section derives a computable lower bound
on this radius and shows how to use it.

\section{Problem Formulation}
\label{sec:setup}

We compute the radius of convergence for a given update direction. Along
that direction, this radius measures the distance to the nearest complex
singularity of the loss. Practically, it gives the maximum safe step
size implied by derivative information. We now formulate the problem and
identify the function whose zeros determine this radius.

\subsection{Model, Loss, and Movement}

To analyze the loss trajectory, we first define the model outputs and update
scheme.

\subsubsection{The model and logits}
A neural network $f_\theta$ maps input $x$ and parameters $\theta$ to raw,
unnormalized scores called \emph{logits}:
\[
z = f_\theta(x) \in \mathbb{R}^n
\]
where $n$ is the number of classes and $z_k$ is the score for class $k$.

\subsubsection{Cross-entropy loss}
For a target class $y$, the loss is the negative log-probability of that class:
\[
\mathcal{L} = -\log \mathrm{softmax}_y(z) = -z_y + \log \sum_{k=1}^n e^{z_k}
\]
The first term $-z_y$ is the correct-class logit. The second term
$\log \sum_k e^{z_k}$ is the \emph{log-partition function}---it normalizes
probabilities and introduces the nonlinearity whose behavior depends on the
step direction.
\subsubsection{The optimizer step}
To minimize loss, we update parameters by taking a step $\tau$ along a
unit direction $v$:
\[
\theta(\tau) = \theta_0 + \tau v
\]
where $\theta_0$ is the current parameters, $v$ is any unit vector in
parameter space (e.g.\ the negative normalized gradient, a normalized Adam
update, or a random
direction), and $\tau$ is the step distance.

\subsubsection{The 1D loss landscape}
We restrict analysis to the line defined by this step. Define the scalar
function $\ell(\tau)$ as the loss at distance $\tau$:
\[
\ell(\tau) = \mathcal{L}(\theta_0 + \tau v)
\]
Our goal is to determine the maximum $\tau$ for which a Taylor series
of $\ell(\tau)$ converges.

\subsection{The Logit Linearization Assumption}

Because the loss landscape is nonlinear, we approximate logits as linear in
the step size. This requires computing how each logit changes with the step.

\subsubsection{Jacobian-vector product}
The rate of change of logit $k$ along direction $v$ is
\[
a_k = \frac{d z_k}{d \tau}\bigg|_{\tau=0} = (\nabla_\theta z_k) \cdot v
\]
For all classes simultaneously, this becomes a Jacobian-vector product:
\[
a = J_z \cdot v \in \mathbb{R}^n
\]
where $J_z = \partial z / \partial \theta$ is the Jacobian matrix containing all
partial derivatives of logits with respect to parameters.

\subsubsection{Core assumption: linearized logits}
We assume logits change linearly for small steps:
\[
z_k(\tau) \approx z_k(0) + a_k \tau
\]
Substituting into the loss yields the approximated 1D loss:
\[
\ell(\tau) = -z_y(0) - a_y \tau + \log \sum_{k=1}^n e^{z_k(0) + a_k \tau}
\]
Define weights $w_k = e^{z_k(0)} > 0$ from the initial logits. The loss
simplifies to
\[
\ell(\tau) = -a_y \tau + \log F(\tau) + \mathrm{const}
\]
where
\begin{equation}
\label{eq:partition}
F(\tau) = \sum_{k=1}^n w_k e^{a_k \tau}
\end{equation}
is the \emph{partition function} along the step direction.

\subsubsection{The radius from partition zeros}
Only $\log F(\tau)$ in $\ell(\tau)$ can diverge, which happens when
$F(\tau) = 0$. The linear term $-a_y \tau$ has no singularities, so divergence depends
entirely on whether $F$ can vanish. For real
$\tau$, all terms are positive, so $F(\tau) > 0$. However, zeros can occur for
complex $\tau$. By Cauchy--Hadamard, the convergence radius is determined by
these complex zeros:
\[
\rho = \min\{|\tau| : F(\tau) = 0,\; \tau \in \mathbb{C}\}
\]
This is the central result of this section: under the linearization
assumption, the convergence radius equals the magnitude of the nearest complex
zero of the partition function. Any step $\tau < \rho$ lies within the
region where derivative information reliably predicts the loss, so $\rho$
is the maximum safe step size implied by the local Taylor model.

\paragraph{Sign invariance.}
This quantity depends on the directional derivatives $a_k$ but not on the sign
convention for the step: using $\theta_0 - \tau v$ flips all $a_k$ signs, but
$\Delta_a = \max_k a_k - \min_k a_k$ and all radius bounds are invariant.

\section{The Convergence Radius}
\label{sec:radius}

Section~\ref{sec:setup} showed that the convergence radius is determined
by zeros of the partition function $F(\tau) = \sum_k w_k e^{a_k \tau}$.
We now derive this radius step by step: first the exact value $\rho^*$,
then a closed form for binary classification, then a per-sample lower
bound $\pi/\Delta_a$, and finally batch and dataset bounds.

\subsection{Definition of \texorpdfstring{$\rho^*$}{rho*}}

\begin{definition}[Exact Convergence Radius]
\label{def:exact-radius}
For a loss function $f$ at parameters $\theta$ with unit update direction
$v$, the \emph{exact convergence radius} is the convergence radius
of $\ell(t) = f(\theta + tv)$ at $t = 0$. By the Cauchy--Hadamard theorem, this
equals the distance from the origin to the nearest singularity of $\ell$'s
analytic continuation to $\mathbb{C}$.
\end{definition}

This definition applies to any loss; for cross-entropy specifically, the
singularities come from zeros of the partition function $F$ (equivalently,
singularities of $\log F$).

\begin{proposition}[Radius from Partition Zeros]
\label{prop:radius}
For cross-entropy loss with partition function $F(t) = \sum w_k e^{a_k t}$
(where $w_k > 0$), the exact radius is
\begin{equation}
\rho^* = \min\{|t| : F(t) = 0\}
\end{equation}
\end{proposition}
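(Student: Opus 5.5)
The approach is to show that the singularities of the analytic continuation of $\ell(t) = -a_y t + \log F(t) + \mathrm{const}$ are exactly the zeros of $F$. Once that is established, Definition~\ref{def:exact-radius} together with Cauchy--Hadamard identifies $\rho^*$ with the distance to the nearest zero. A few preliminary facts come first. $F$ is entire, being a finite sum of exponentials $w_k e^{a_k t}$. It is also not identically zero, because $F(0)=\sum_k w_k>0$. Its zeros are therefore isolated. The degenerate case in which all $a_k$ are equal should be set aside: then $F(t)=(\sum_k w_k)e^{a t}$ has no zeros, $\ell$ is entire, and the statement holds with the convention $\min\emptyset=+\infty$. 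The linear term $-a_y t$ is entire, so the only possible obstruction comes from $\log F$.

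\textbf{Sufficiency (no other singularities).} Let $t_0$ be the nearest zero of $F$, with $r_0=|t_0|$; a nearest zero exists because the zeros are isolated and hence form a closed discrete set. On the open disk $D=\{|t|<r_0\}$, $F$ is holomorphic and nonvanishing. $D$ is simply connected, so $F$ has a holomorphic logarithm there, for instance $\int_0^t F'/F\,ds + \log F(0)$. This logarithm agrees with the real $\log F$ on $(-r_0,r_0)$. Hence $\ell$ extends holomorphically to $D$, and by Cauchy's estimates the Taylor series at $0$ converges on $D$. This gives $\rho^* \ge r_0$.

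\textbf{Necessity (a zero really obstructs).} This is the main step. I must show that the series cannot converge on any disk of radius $>r_0$. Suppose it did. Then it would define a holomorphic function $g$ on a larger disk $D'$ containing $t_0$, with $g=\ell$ on $D$. Differentiating gives $g' + a_y = F'/F$ on $D$, so $e^{g + a_y t}$ is holomorphic on $D'$ and equals a constant multiple of $F$ on $D$. By the identity theorem this relation persists on all of $D'$. But $e^{g+a_y t}$ never vanishes, whereas $F(t_0)=0$, a contradiction.

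Equivalently, near $t_0$ we can write $F(t)=(t-t_0)^m h(t)$ with $h(t_0)\ne0$ and $m\ge1$. Then $\ell' = m/(t-t_0) + h'/h - a_y$ has a genuine pole, which is why $\log F$ has a branch point and no holomorphic extension. Hence $\rho^*\le r_0$. Combining the two inequalities gives $\rho^*=\min\{|t|:F(t)=0\}$.

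The main subtlety I expect is being careful that "singularity of the analytic continuation" is well defined even though $\log F$ is multivalued globally. The argument above sidesteps this: it works only with the single-valued branch on the disk and with the derivative $F'/F$, which is meromorphic on all of $\mathbb{C}$ with poles exactly at the zeros of $F$. This makes the characterization branch-independent.
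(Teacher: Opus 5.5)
Your proposal is correct and follows the same route as the paper. The paper argues that the linear term is entire, that $\log F$ is holomorphic wherever $F$ does not vanish, and that a zero of $F$ at distance $\rho^*$ is a branch point capping the radius; your identity-theorem argument (a nonvanishing holomorphic $e^{g+a_y t}$ would equal $cF$, which vanishes at $t_0$) simply makes that last branch-point assertion rigorous.
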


\begin{proof}
The loss is $\ell(t) = -z_y(t) + \log F(t)$. The linear term $-z_y(t)$ is
entire; singularities come only from $\log F$, which has branch points
where $F = 0$. If $F$ is nonzero on $|t| \le R$, then $\log F$ is
holomorphic there. At a zero $t_0$ with $|t_0| = \rho^*$, $\log F$ has a
branch point, so the radius is exactly $\rho^*$.
\end{proof}

\paragraph{Notation.}
In this section, $t$ denotes the possibly complex step parameter
(elsewhere written $\tau$ for real steps). With this convention, we
exploit complex analysis to find $\rho^*$ in closed form---starting
with binary classification.

\subsection{Binary Classification: Exact Formula}

For binary classification, we can compute $\rho^*$ in closed form. Let
$\Delta_a = |a_1 - a_2|$ be the logit-derivative spread and
$\delta = z_1(0) - z_2(0)$ the logit gap.

\begin{theorem}[Binary Convergence Radius]
\label{thm:binary}
For $F(t) = w_1 + w_2 e^{\Delta_a t}$ with $w_1, w_2 > 0$,
spread $\Delta_a = |a_1 - a_2|$, and logit gap $\delta = \log(w_1/w_2)$:
\begin{equation}
\boxed{\rho^* = \frac{\sqrt{\delta^2 + \pi^2}}{\Delta_a}}
\end{equation}
The minimum $\rho^* = \pi/\Delta_a$ occurs when $\delta = 0$ (balanced).
\end{theorem}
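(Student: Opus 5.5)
By Proposition~\ref{prop:radius}, the exact radius is the modulus of the nearest complex zero of $F$. So the plan is to solve $F(t)=0$ explicitly and then minimize $|t|$ over the resulting zero set.

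\textbf{Step 1: reduce to a scalar equation.} First we normalize. Assume $\Delta_a>0$; if $\Delta_a=0$ then $F$ is a positive constant, so $F$ has no zeros and $\rho^*=\infty$, consistent with the formula read as $+\infty$. The zero condition is
\[
e^{\Delta_a t} = -\frac{w_1}{w_2} = -e^{\delta}.
\]
The ordering of $a_1,a_2$ only swaps which weight carries the exponential. That swap replaces $\delta$ by $-\delta$ (or $t$ by $-t$), and neither change affects $|t|$. We may therefore take the stated form $F(t)=w_1+w_2e^{\Delta_a t}$ without loss of generality.

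\textbf{Step 2: solve with the complex logarithm.} Write $-e^{\delta}=e^{\delta+i\pi}$. Then $e^{\Delta_a t}=e^{\delta+i\pi}$ holds exactly when
\[
\Delta_a t = \delta + i(2m+1)\pi, \qquad m\in\mathbb{Z},
\]
by the periodicity of $\exp$ with period $2\pi i$. Hence the zeros are
\[
t_m = \frac{\delta + i(2m+1)\pi}{\Delta_a}, \qquad |t_m| = \frac{\sqrt{\delta^2+(2m+1)^2\pi^2}}{\Delta_a}.
\]

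\textbf{Step 3: minimize over branches.} Since $|2m+1|\ge 1$ for every integer $m$, with equality at $m=0$ and $m=-1$, the minimum of $|t_m|$ is $\sqrt{\delta^2+\pi^2}/\Delta_a$. This is the boxed formula. As a function of $\delta$ it is minimized at $\delta=0$, where it equals $\pi/\Delta_a$. This gives the balanced-case claim.

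\textbf{Step 4: confirm these are genuine singularities.} The only point needing care is that the zeros of $F$ are actual singularities of $\log F$ rather than removable ones. I would check this by computing
\[
F'(t_m)=w_2\Delta_a e^{\Delta_a t_m}=-w_1\Delta_a\neq 0.
\]
So each zero is simple, and near it $\log F$ behaves like $\log(t-t_m)$, which is a branch point. The derivative $\ell'=-a_y+F'/F$ therefore has a simple pole at each $t_m$. Consequently the Taylor series of $\ell$ cannot converge beyond $|t_0|$, and Proposition~\ref{prop:radius} gives equality.
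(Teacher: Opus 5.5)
Your proof is correct and is essentially the paper's argument: solve $e^{\Delta_a t} = -w_1/w_2 = e^{\delta + i\pi}$ with the complex logarithm, obtain $t_m = (\delta + i(2m+1)\pi)/\Delta_a$, and take the nearest branches $m = 0, -1$. Your additional checks (the degenerate case $\Delta_a = 0$, the orientation of $a_1, a_2$, and $F'(t_m) = -w_1\Delta_a \neq 0$ confirming genuine branch points) are sound and make explicit what the paper leaves to Proposition~\ref{prop:radius}.
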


\begin{proof}
$F(t) = 0$ requires $e^{\Delta_a t} = -w_1/w_2$. Taking the complex
logarithm:
\[
\Delta_a t = \log(w_1/w_2) + i\pi(2k{+}1) = \delta + i\pi(2k{+}1)
\]
The nearest zeros (at $k = 0, -1$) lie at distance
$|t| = \sqrt{\delta^2 + \pi^2}/\Delta_a$.
\end{proof}

The formula reveals two contributions to the radius:
\begin{itemize}
\item The imaginary part $\pi/\Delta_a$ is confidence-independent: it is
set by $e^{i\pi} = -1$, the condition for exponential sums to cancel.
Even a balanced prediction ($\delta = 0$) has finite radius.
\item The real part $\delta/\Delta_a$ grows with confidence: large
margins shift the zero away from the imaginary axis, making confident
predictions more robust to large steps.
\end{itemize}

\subsection{Per-Sample Lower Bound}

For multi-class classification, exact computation of $\rho^*$ requires
finding zeros of $F(t) = \sum_{k=1}^n w_k e^{a_k t}$, which lacks a
closed form for $n > 2$. We derive a lower bound for a single sample
along a single direction, then aggregate to batches and datasets.

\begin{definition}[Logit-Derivative Spread]
\label{def:spread}
Along direction $v$, define the logit JVP $a(x; v) = J_{z(x)} v$ and
the \emph{per-sample logit-derivative spread}
\begin{equation}
\Delta_a(x; v) = \max_k a_k - \min_k a_k.
\end{equation}
\end{definition}

\subsubsection{Proof: \texorpdfstring{$\rho^*(x; v) \ge \pi/\Delta_a(x; v)$}{rho* >= pi/Delta\_a}}

We show that no zero of $F(t) = \sum w_k e^{a_k t}$
lies within distance $\pi/\Delta_a$ of the origin. For $n > 2$ classes,
the zeros have no closed form, so algebraic methods do not apply.
Instead, we use a geometric argument: when $|t|$ is small, all terms of
$F(t)$ lie in the same open half-plane and cannot cancel.

\begin{lemma}[Half-Plane Obstruction]
\label{lem:halfplane}
If complex numbers $u_1, \ldots, u_n$ have arguments in an open arc of
length $< \pi$, then $\sum c_k u_k \ne 0$ for any $c_k > 0$.
\end{lemma}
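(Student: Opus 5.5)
The idea is to rotate so that every term lands in the open right half-plane, and then read off a strictly positive real part. We assume $n \ge 1$, the case the statement intends; the empty sum is trivially zero and is excluded.

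First, the hypothesis says there are $\alpha\in\mathbb{R}$ and $\beta<\pi$ such that each $u_k$ is nonzero and
\[
\arg u_k \in (\alpha,\alpha+\beta)
\]
for a suitable choice of branch. Nonzeroness is implicit, because $\arg 0$ is undefined.

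Second, rotate by the midpoint of this arc. Set $\phi = \alpha + \beta/2$ and multiply the whole sum by the unit complex number $e^{-i\phi}$. This does not affect whether the sum vanishes, and the rotated numbers $u_k' = e^{-i\phi}u_k$ have arguments in $(-\beta/2,\beta/2)\subset(-\pi/2,\pi/2)$. It follows that
\[
\operatorname{Re} u_k' = |u_k|\cos(\arg u_k') > 0,
\]
since $|u_k|>0$ and cosine is positive on $(-\pi/2,\pi/2)$.

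Third, use linearity of the real part together with $c_k>0$:
\[
\operatorname{Re}\Bigl(e^{-i\phi}\sum_k c_k u_k\Bigr) = \sum_k c_k \operatorname{Re} u_k' > 0.
\]
Hence $e^{-i\phi}\sum_k c_k u_k \neq 0$, and therefore $\sum_k c_k u_k \neq 0$.

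There is no real obstacle here. The only care needed is in the first step, to make the argument choice consistent. An "arc of length $<\pi$" on the circle lifts to an interval of length $<\pi$ in $\mathbb{R}$ once we fix a branch of $\arg$ continuous on that arc, and the midpoint rotation is defined with respect to this lift. This is exactly the form in which the lemma will be applied to $u_k = e^{a_k t}$. For $|t|<\pi/\Delta_a$, the arguments $a_k\operatorname{Im} t$ lie in an interval of length $\Delta_a|\operatorname{Im} t| < \pi$, so the lemma, taken with $c_k = w_k$, gives $F(t)\neq 0$.
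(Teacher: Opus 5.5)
Your proof is correct and is the same argument as the paper's: rotate by the bisector $e^{-i\phi}$ of the arc so that every $e^{-i\phi}u_k$ has positive real part, then use $c_k>0$ and linearity of $\mathrm{Re}$ to conclude that the rotated sum has positive real part. Your extra care about lifting the arc to a real interval and about $u_k\neq 0$ makes explicit what the paper's one-line proof leaves implicit.
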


\begin{proof}
Let $\phi$ bisect the arc. Then $\mathrm{Re}(e^{-i\phi} u_k) > 0$ for
all $k$, so $\mathrm{Re}(e^{-i\phi} \sum c_k u_k) > 0$.
\end{proof}

\noindent Figure~\ref{fig:general-halfplane} visualizes this geometry:
vectors confined to an open half-plane share a projection direction with
positive real part, so no positive combination can produce zero.

\begin{figure}[!tb]
\centering
\includegraphics[width=\linewidth]{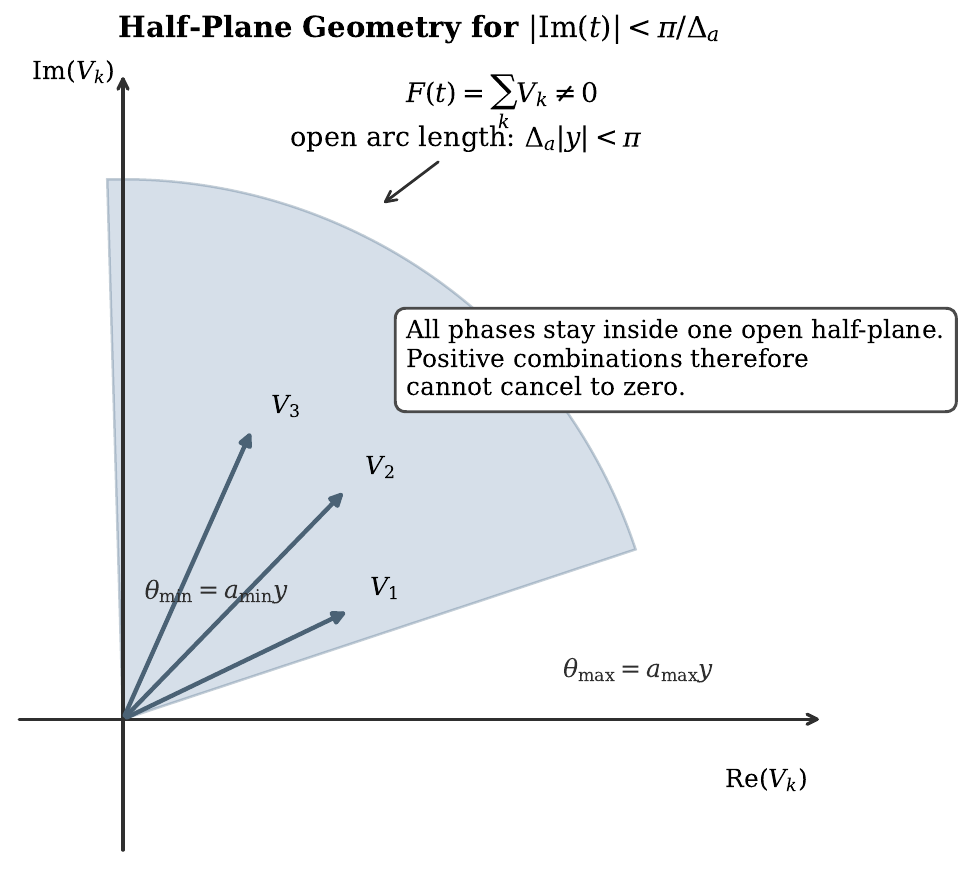}
\caption{Half-plane geometry behind Lemma~\ref{lem:halfplane}. If the
phases of $V_k = w_k e^{a_k t}$ lie in an open arc of length $< \pi$,
then all vectors remain in a common open half-plane, so no positive
combination can cancel to zero.}
\label{fig:general-halfplane}
\end{figure}

\begin{theorem}[General Lower Bound]
\label{thm:general}
For $F(t) = \sum w_k e^{a_k t}$ with $w_k > 0$:
\begin{equation}
\rho^* \ge \frac{\pi}{\Delta_a}
\end{equation}
where $\Delta_a = \max_k a_k - \min_k a_k$.
\end{theorem}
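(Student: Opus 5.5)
We argue by contradiction using the Half-Plane Obstruction (Lemma~\ref{lem:halfplane}). Suppose $F(t)=0$ for some complex $t=s+iu$ with $|t|<\pi/\Delta_a$. We aim to show that the terms $w_k e^{a_k t}$ cannot cancel. Write each term as
\[
w_k e^{a_k t} = \bigl(w_k e^{a_k s}\bigr)\, e^{i a_k u}.
\]
Here $c_k := w_k e^{a_k s} > 0$ is a positive real coefficient and $u_k := e^{i a_k u}$ lies on the unit circle with argument $a_k u$. The real part $s$ of $t$ therefore only rescales the weights, and the phases depend only on $u = \mathrm{Im}\, t$.

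Next we bound the angular spread of the phases. The arguments $a_k u$ all lie in the closed interval between $u\min_k a_k$ and $u\max_k a_k$, which has length $|u|\,\Delta_a$. Since $|u|\le |t| < \pi/\Delta_a$, this length is strictly less than $\pi$. A closed interval of length $<\pi$ sits inside an open arc of length $<\pi$ (enlarge it slightly). Lemma~\ref{lem:halfplane} then gives $\sum_k c_k u_k \neq 0$, i.e.\ $F(t)\neq 0$, which is a contradiction.

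Hence $F$ has no zeros in the open disk $|t|<\pi/\Delta_a$. By Proposition~\ref{prop:radius}, $\rho^* = \min\{|t|:F(t)=0\} \ge \pi/\Delta_a$. If $F$ has no zeros at all, the radius is infinite and the bound holds trivially.

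Two technical points need attention.

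\textbf{The degenerate case $\Delta_a=0$.} Here every $a_k$ equals a common value $a$, so $F(t)=(\sum_k w_k)e^{a t}$ never vanishes. Then $\rho^*=\infty$, consistent with the convention $\pi/0=\infty$.

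\textbf{The use of the lemma.} Lemma~\ref{lem:halfplane} requires an open arc of length $<\pi$. The arguments genuinely span length $|u|\Delta_a$, which is strictly below $\pi$ precisely because of the strict inequality $|t|<\pi/\Delta_a$. We must take care not to claim the bound $|u|\Delta_a<\pi$ when only $|t|\le \pi/\Delta_a$ is known.

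The tightness remark also deserves a check. Theorem~\ref{thm:binary} with $\delta=0$ gives $\rho^*=\pi/\Delta_a$ exactly, so the bound is attained. This confirms there is no slack to gain in general and that using the bound $|\mathrm{Im}\,t|\le|t|$ loses nothing in the worst case.
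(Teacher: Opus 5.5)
Your proof is correct and follows the paper's argument. The paper likewise absorbs $e^{a_k\,\mathrm{Re}\,t}$ into positive weights, observes that the phases $a_k\,\mathrm{Im}\,t$ span an arc of length $|\mathrm{Im}\,t|\,\Delta_a<\pi$, and applies Lemma~\ref{lem:halfplane}. It states the conclusion as the slightly stronger fact that the whole strip $|\mathrm{Im}\,t|<\pi/\Delta_a$ is zero-free; your argument also yields this, since only $|\mathrm{Im}\,t|$ enters. Your explicit treatment of $\Delta_a=0$ and of the closed-versus-open arc are harmless additions.
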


\begin{proof}
Let $t = u + iy$, where $u = \mathrm{Re}(t)$ and $y = \mathrm{Im}(t)$.
Each term $w_k e^{a_k t}$ has phase $a_k y$, because $w_k e^{a_k u} > 0$
contributes only magnitude. If $\Delta_a |y| < \pi$, all phases lie in
an arc of length $< \pi$. By Lemma~\ref{lem:halfplane}, $F(t) \ne 0$.
Therefore any zero satisfies $|\mathrm{Im}(t)| \ge \pi/\Delta_a$, giving
$|t| \ge \pi/\Delta_a$.
\end{proof}

Figure~\ref{fig:general-tplane} shows the same implication in the
complex $t$ plane: the strip $|\mathrm{Im}(t)| < \pi/\Delta_a$ is
zero-free, and the disk $|t| < \pi/\Delta_a$ lies entirely inside it.
How tight is this bound?

\begin{figure}[!tb]
\centering
\includegraphics[width=\linewidth]{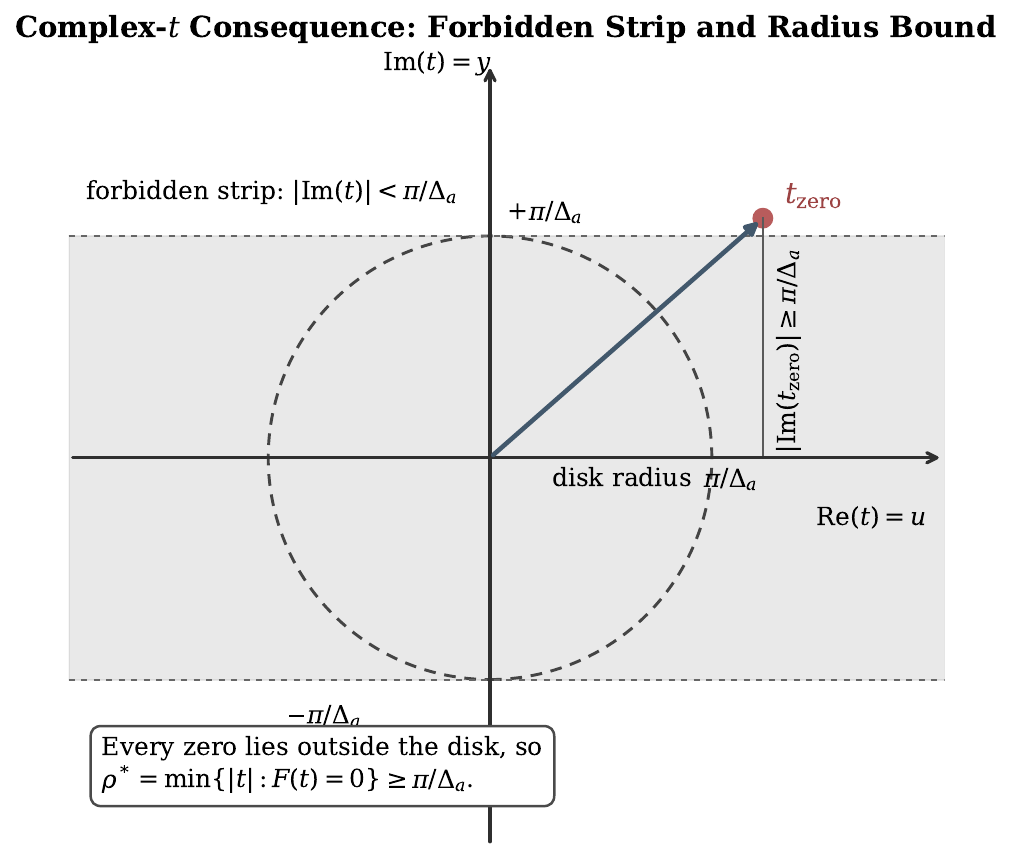}
\caption{Complex-$t$ consequence of Theorem~\ref{thm:general}. The
half-plane obstruction excludes zeros from the strip
$|\mathrm{Im}(t)| < \pi/\Delta_a$. Since the disk
$|t| < \pi/\Delta_a$ lies inside that strip, the minimal zero modulus
satisfies $\rho^* \ge \pi/\Delta_a$.}
\label{fig:general-tplane}
\end{figure}

\noindent For confident predictions ($|\delta|$ large) or multi-class
problems, the bound is conservative, but for binary classification at
$\delta = 0$ it is tight: $\rho^* = \pi/\Delta_a$.

\subsubsection{Per-sample ghosts}

Applying Theorem~\ref{thm:general} to a single sample~$x$ gives
$\rho(x; v) \ge \pi/\Delta_a(x; v)$. Each sample has a ghost---a
complex singularity determined by its logits and directional slopes.
For multiclass samples, exact ghost locations are roots of
$F_x(t)=\sum_k w_k(x)e^{a_k(x;v)t}$ and generally have no closed form,
but can be computed numerically for any given sample.
In the binary case, or under a top-2 reduction with margin
$\delta(x)=z_y-z_c$ and top-2 derivative gap
$\Delta_{y,c}(x;v)=a_y-a_c$, the nearest ghost is
\begin{equation}
\tau_{\text{ghost}}(x; v)=\frac{\delta(x)+i\pi}{\Delta_{y,c}(x;v)},\qquad
\rho(x;v)=\frac{\sqrt{\delta(x)^2+\pi^2}}{|\Delta_{y,c}(x;v)|}.
\end{equation}

This formula explains why $\pi/\Delta_a$ is conservative: confident
samples ($|\delta|$ large) have exact radii exceeding the bound by a
factor of $\sqrt{1+(\delta/\pi)^2}$, while low-margin samples lie near
the boundary.

\subsection{Batch and Dataset Bounds}

Each training step updates all samples simultaneously. To ensure safety,
$\tau$ must satisfy $\tau < \min_x \rho(x; v)$. From
Theorem~\ref{thm:general}, we have
$\rho(x; v) \ge \pi/\Delta_a(x; v)$. This yields the conservative
sufficient condition $\tau < \pi / \max_x \Delta_a(x; v)$. The
bottleneck sample---the one with largest $\Delta_a$---determines the
safe step size.

\begin{definition}[Computable Radius Bound]
\label{def:rho-a}
For a sample set $\mathcal{S}$, the $\Delta_a$-based radius bound is
\begin{equation}
\boxed{\rho_a(v) = \frac{\pi}{\max_{x \in \mathcal{S}} \Delta_a(x; v)}}
\end{equation}
\end{definition}

\paragraph{Notation conventions.}
When arguments are omitted, $v$ defaults to the gradient
direction $\nabla f / \|\nabla f\|$, and the sample set defaults to the
full training set $\mathcal{D}$. Thus
$\Delta_a(x) = \Delta_a(x; \nabla f/\|\nabla f\|)$ and
$\rho_a = \pi / \max_{x \in \mathcal{D}} \Delta_a(x)$.
Restricting to a mini-batch $\mathcal{B} \subseteq \mathcal{D}$ gives
a less conservative bound $\rho_{\mathcal{B}} \ge \rho_a$, since the
maximum over fewer samples is smaller. In general:
\begin{equation}
\rho_a \le \rho_{\mathcal{B}} \le \rho(x;v)\quad\text{for any }x\in\mathcal{B}
\end{equation}
The training-set bound $\rho_a$ is the most conservative; the per-sample
bound $\rho(x;v)$ is the least. In practice, the bottleneck samples that
drive $\rho_a$ down are those with small top-2 margins, whose nearest
ghosts lie closest to the real axis.

\subsection{Computing \texorpdfstring{$\Delta_a(x; v)$}{Delta\_a(x; v)}}

We now turn to practical computation. Computing $\Delta_a(x; v)$
requires one forward-mode automatic differentiation (AD) pass
(a Jacobian-vector product) per
sample:
\begin{enumerate}
\item Compute update direction $v = \nabla f / \|\nabla f\|$
\item For each sample $x$, compute $a(x; v) = J_{z(x)} v$
\item Return $\Delta_a(x; v) = \max_k a_k - \min_k a_k$
\end{enumerate}
\textbf{Cost:} each JVP is comparable to one forward pass
($\sim$1.5$\times$ overhead). Since only the bottleneck sample determines
$\rho_a$, total cost for a batch of $B$ samples is approximately $1.5B$
forward passes.

\paragraph{Summary.}
We defined the exact radius $\rho^*$ and derived its closed form for
binary classification. We then established the computable bound
$\rho_a(v) = \pi / \max_x \Delta_a(x; v)$ and proved that
$\rho^* \ge \rho_a$. Section~\ref{sec:predict} shows how to use this
bound to predict and prevent instability.

\section{Using the Radius}
\label{sec:predict}

Section~\ref{sec:radius} established that the Taylor expansion converges only
when the step distance does not exceed the convergence radius. For any
direction $v$:
\begin{equation}
\boxed{\tau < \rho(v)}
\end{equation}
where $\tau = \|p\|$ is the step distance and $\rho(v)$ is the convergence
radius, satisfying $\rho(v) \ge \rho_a(v) = \pi/\max_x \Delta_a(x; v)$. We now
define a normalized step size, interpret it as a learning-rate constraint, and
design a controller that enforces the bound.

\subsection{The Normalized Step Size}

Let $p$ be the parameter update from any optimizer, with
$v = p/\|p\|$ as the unit step direction. Both the step distance $\tau = \|p\|$
and the radius $\rho_a(v) = \pi/\Delta_a(v)$ are measured along $v$, with
\[
\Delta_a(v)
= \max_{x \in \mathcal{S}}
\left(\max_k a_k(x;v) - \min_k a_k(x;v)\right),
\quad a(x;v)=J_{z(x)}v.
\]
Here $\mathcal{S}$ is the sample set used by the controller (mini-batch or full
training set). The ratio of step distance to radius is dimensionless:

\begin{definition}[Normalized Step Size]
\label{def:ratio}
\begin{equation}
\boxed{r(v) = \frac{\tau}{\rho_a(v)} = \frac{\|p\| \cdot \Delta_a(v)}{\pi}}
\end{equation}
where $v$ is any unit optimizer direction, $\|p\|$ is the step norm, and
$\Delta_a(v)$ is the set-maximum spread defined above. For gradient descent,
$v = \nabla f / \|\nabla f\|$ and $\|p\| = \eta\|\nabla f\|$, recovering
$r = \eta\|\nabla f\|\cdot\Delta_a/\pi$.
\end{definition}

Like a power series $\sum a_n x^n$ with radius $R$, $r<1$ ensures
convergence under the conservative bound $\rho_a$. Because $r$ normalizes
by the local radius, it is architecture-independent: it measures the step in
natural units set by local loss geometry. Its value partitions optimizer
behavior into three regimes:
\begin{itemize}
\item $r < 1$: Taylor series converges; polynomial model is reliable
\item $r \approx 1$: boundary regime; error bound diverges
\item $r > 1$: Taylor series may diverge; finite-order approximations lose
      guarantees
\end{itemize}

\subsection{Learning Rate as Geometric Constraint}

The condition $r < 1$ can be rewritten as
\begin{equation}
\eta < \frac{\rho_a}{\|\nabla f\|} = \frac{\pi}{\Delta_a \|\nabla f\|}
\end{equation}
Rather than being tuned freely, the learning rate is upper-bounded at each
step by local analytic structure.

One might ask whether adaptive optimizers already enforce this bound.
Adam scales updates by inverse gradient moments, addressing per-parameter
scale differences. This is distinct from the radius constraint, which bounds
the \emph{global} step distance $\tau$. A flat Hessian (small gradient
moments) does not imply a large radius; the ghosts remain at fixed imaginary
distance $\pi$.

\subsection{A \texorpdfstring{$\rho$}{rho}-Adaptive Controller}

Standard optimizers like Adam ignore the radius bound: they scale updates by
gradient moments, so as $\rho_a$ shrinks during training
(Section~\ref{sec:control}), they keep using step sizes that worked earlier
until they cross the boundary and diverge.

Can we build a practical controller that enforces $\tau \le \rho_a$? The
requirements are: (1) compute $\rho_a$ efficiently, (2) modulate the step
without new hyperparameters, and (3) generalize across architectures.

\paragraph{Design.}
A simple norm clip directly enforces the conservative safety condition.

\begin{proposition}[Radius Clip Enforces $r \le 1$]
\label{prop:rhoscale}
Let $p$ be a tentative optimizer update, with $\tau_0=\|p\|$ and
$v=p/\|p\|$. Define
\[
s=\min\!\left(1,\frac{\rho_a(v)}{\tau_0}\right),\qquad
\tilde p = s\,p.
\]
Then $\|\tilde p\|\le \rho_a(v)$ and therefore
$r(v)=\|\tilde p\|/\rho_a(v)\le 1$.
\end{proposition}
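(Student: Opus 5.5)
The argument is a direct computation, split on which branch of the minimum defining $s$ is active. Two preliminary facts carry it. First, $\rho_a(v)=\pi/\max_{x\in\mathcal S}\Delta_a(x;v)$ is a positive number; it is $+\infty$ in the degenerate case where every spread vanishes, and then $s=1$ and the claim is trivial. Second, rescaling $p$ by $s>0$ leaves the unit direction unchanged. So the radius $\rho_a(v)$ at which we evaluate $r$ is the same before and after the clip.

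\textbf{Step 1 (direction is preserved).} Since $s\in(0,1]$, we have $\tilde p/\|\tilde p\|=sp/(s\|p\|)=v$. Hence $\Delta_a(v)$, and so $\rho_a(v)$ in Definition~\ref{def:rho-a}, is the same quantity whether we view the step as $p$ or as $\tilde p$. In particular $r(v)$ in Definition~\ref{def:ratio} is well defined for $\tilde p$ with $\tau=\|\tilde p\|$. We assume $\tau_0>0$ so that $v$ is defined.

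\textbf{Step 2 (norm bound by cases).} We have $\|\tilde p\|=s\tau_0=\min(\tau_0,\rho_a(v))$, and we check the two branches.
\begin{itemize}
\item If $\tau_0\le\rho_a(v)$, then $s=1$ and $\|\tilde p\|=\tau_0\le\rho_a(v)$.
\item If $\tau_0>\rho_a(v)$, then $s=\rho_a(v)/\tau_0$, so $\|\tilde p\|=\rho_a(v)$.
\end{itemize}
In either case $\|\tilde p\|\le\rho_a(v)$. Dividing by $\rho_a(v)>0$ gives $r(v)\le1$, with equality exactly when the clip is active.

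\textbf{Expected obstacle.} The arithmetic is trivial. The only point needing care is Step 1: $\rho_a$ depends on $v$, so one must make sure the clip does not change the direction at which the radius is evaluated. Positive scaling preserves $v$, which settles this. I would also add a remark that if $\rho_a(v)=\infty$ (all $a_k$ equal for every sample), the minimum is $1$ and the statement holds vacuously.
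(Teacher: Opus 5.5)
Your proof is correct and is essentially the paper's own: the paper likewise writes $\|\tilde p\| = s\tau_0 = \min(\tau_0,\rho_a(v)) \le \rho_a(v)$ and divides by $\rho_a(v)$, and your case split just unpacks that minimum. Your Step~1 point, that positive rescaling leaves $v$ and hence $\rho_a(v)$ unchanged, is implicit in the paper and worth stating; one small nit is that equality $r(v)=1$ holds exactly when $\tau_0 \ge \rho_a(v)$, which includes the tie case where $s=1$.
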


\begin{proof}
\[
\|\tilde p\| = s\tau_0
= \min\!\left(\tau_0,\rho_a(v)\right)
\le \rho_a(v).
\]
Divide by $\rho_a(v)$.
\end{proof}

In practice, the controller rescales the tentative optimizer update:
\begin{equation}
s = \min\!\bigl(1,\; \rho_a / \|p\|\bigr),\qquad
\tilde p = s\,p
\end{equation}
where $\|p\|$ is the optimizer update norm (the Adam direction times the base
learning rate) and $\rho_a = \pi/\Delta_a$ is computed via one JVP along the
actual optimizer direction $v$. No additional hyperparameters are required.

\textbf{Framing.} This controller is a proof of concept, not a production
optimizer. It shows that $\rho$ alone contains enough information to
determine a conservative safe step size, without requiring manual retuning.

\section{Experimental Validation}
\label{sec:control}

The preceding sections derived a stability bound ($r < 1$) and designed an
adaptive controller. Three questions remain open: (1)~Does $r = 1$ actually
predict failure across architectures and directions? (2)~Does the theory
correctly predict how temperature and other parameters shift the boundary?
(3)~Does $r$ track instabilities in realistic, unperturbed training? We
address these progressively, moving from controlled single-step tests to
multi-step training and from artificial perturbations to natural instabilities.

\subsection{Learning Rate (LR) Spike Tests}

\begin{figure*}[!tb]
\centering
\includegraphics[width=0.85\textwidth]{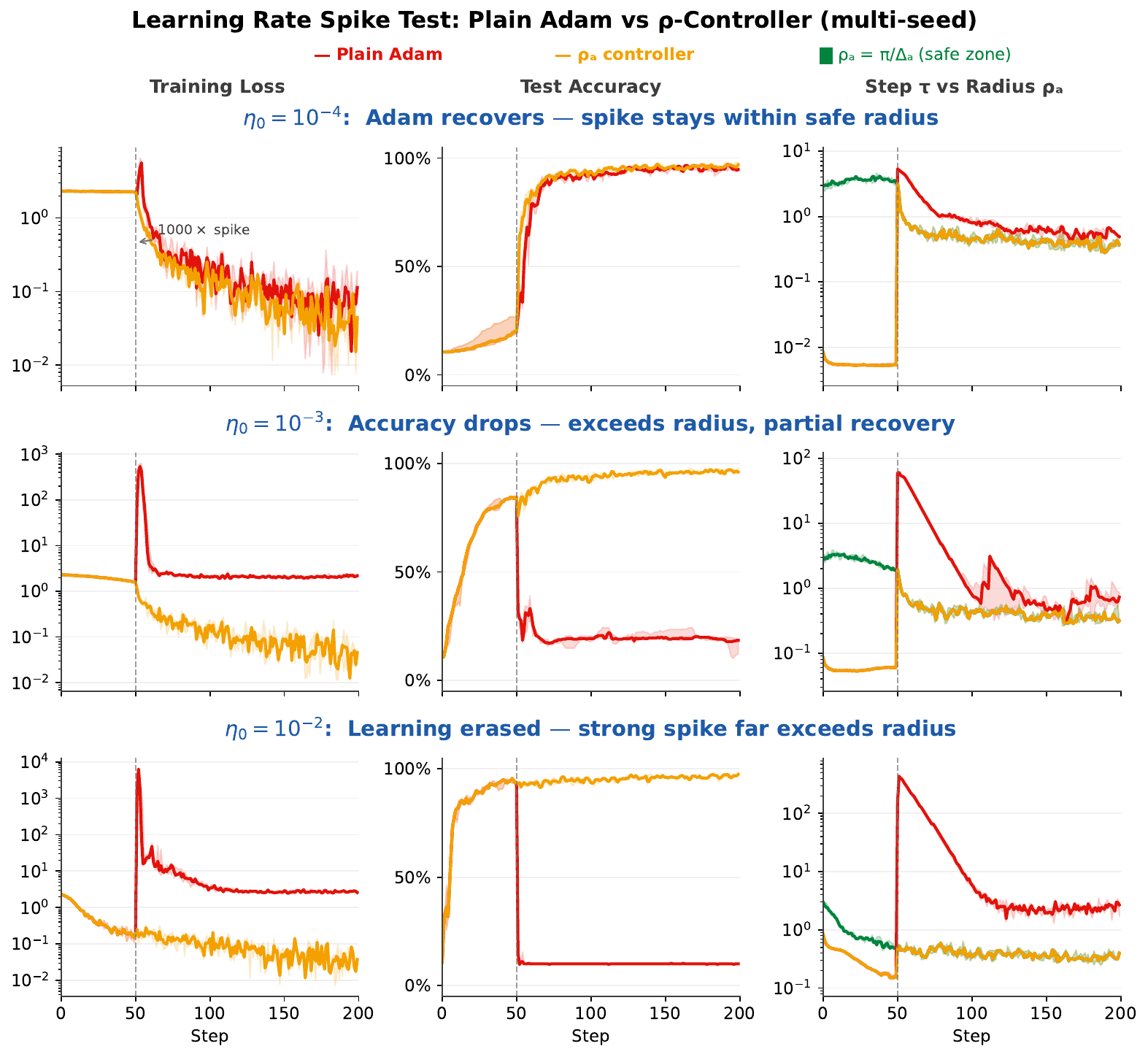}
\caption{Learning rate spike test (5 seeds, median with interquartile range (IQR) bands).
Each row uses a different base LR.
\textbf{Left}: training loss. \textbf{Center}: test accuracy.
\textbf{Right}: step size $\tau$ vs radius $\rho_a$ (green = safe zone).
Row~1: spike stays within $\rho_a$, Adam recovers. Row~2: $\tau$ exceeds
$\rho_a$ by ${\sim}10\times$, accuracy drops. Row~3: $\tau$
exceeds $\rho_a$ by ${\sim}100\times$, learning is erased. The
$\rho_a$-controller (gold) survives all spikes.}
\label{fig:lrgrid}
\end{figure*}

We begin with the most controlled setting: injecting a known LR spike and
asking whether $r = 1$ predicts the outcome. We sweep learning rates
over four orders of magnitude while injecting a $1000\times$ spike.

We use a two-layer multilayer perceptron (MLP; $64 \to 128 \to 10$) for digit classification with
Adam and custom step-size control. At step~50, the base learning rate is
multiplied by $1000\times$ and held for 150 more steps.

Figure~\ref{fig:lrgrid} shows three regimes:

\paragraph{$\eta_0 = 10^{-4}$ (row~1):} The spike raises the effective
learning rate to $0.1$, which remains within $\rho_a$. Both methods are
indistinguishable. The right panel confirms that $\tau$ stays in the green
safe region.

\paragraph{$\eta_0 = 10^{-3}$ (row~2):} Post-spike LR is $1.0$. Plain
Adam's accuracy drops to $\sim 50\%$, never recovering. The right panel
shows $\tau$ exceeding $\rho_a$ by $\sim 10\times$.

\paragraph{$\eta_0 = 10^{-2}$ (row~3):} Post-spike LR is $10$. Plain Adam
diverges---accuracy drops to chance. The right panel shows $\tau$
exceeding $\rho_a$ by $\sim 100\times$.

The $\rho_a$-controller (gold) survives all spikes by capping
$\tau \le \rho_a$. This is a controlled perturbation by design; the following
subsections test whether the boundary generalizes.

\subsection{Cross-Architecture Validation}

\begin{figure*}[!tb]
\centering
\includegraphics[width=\textwidth]{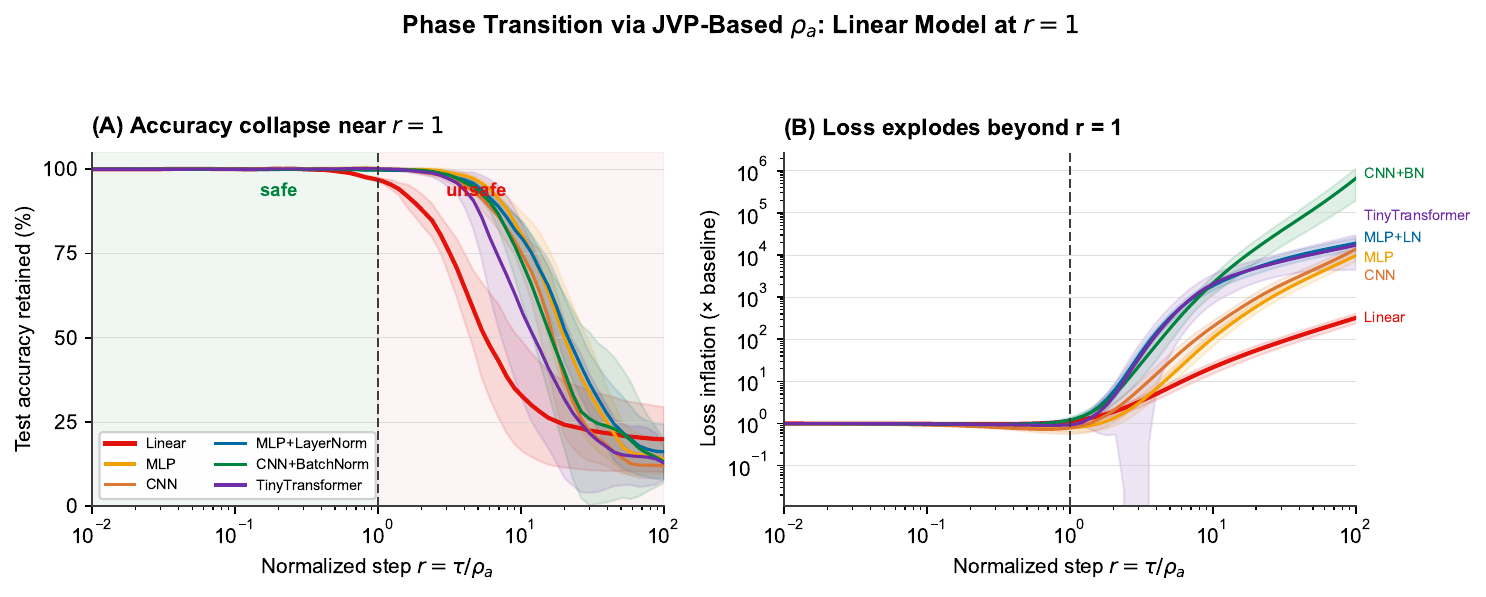}
\caption{Phase transition across architectures. Six architectures
trained to convergence, then one gradient step at varying $r$. (A)~Test
accuracy retained. Linear model transitions at $r \approx 1$. Other
architectures retain confidence-margin slack. (B)~Loss inflation
(post-step loss / pre-step loss).}
\label{fig:jvp}
\end{figure*}

Having established that $r = 1$ predicts spike-induced failure on a single
architecture, we ask whether the boundary holds more broadly.
Figure~\ref{fig:jvp} sweeps $r$ from $0.01$ to $100$ on six
architectures via single gradient steps on converged models.
The key result is that \textbf{no tested architecture fails for $r < 1$}.
Multi-step dynamics are examined in
Sections~\ref{sec:natural}--\ref{sec:transformer}.

Phase transitions cluster between $r = 1$ and $r = 10$. This slack reflects
confidence-margin slack: the exact radius
$\rho^* = \sqrt{\delta^2 + \pi^2}/\Delta_a$
(Theorem~\ref{thm:binary}) exceeds the conservative bound
$\rho_a = \pi/\Delta_a$ whenever the logit gap satisfies $|\delta| > 0$.
Confident predictions therefore extend the safe region beyond the
conservative bound. Before testing this quantitatively, we first verify that
$r$ is direction-independent.

\subsection{Random-Direction Validation}

\begin{figure*}[!tb]
\centering
\includegraphics[width=\textwidth]{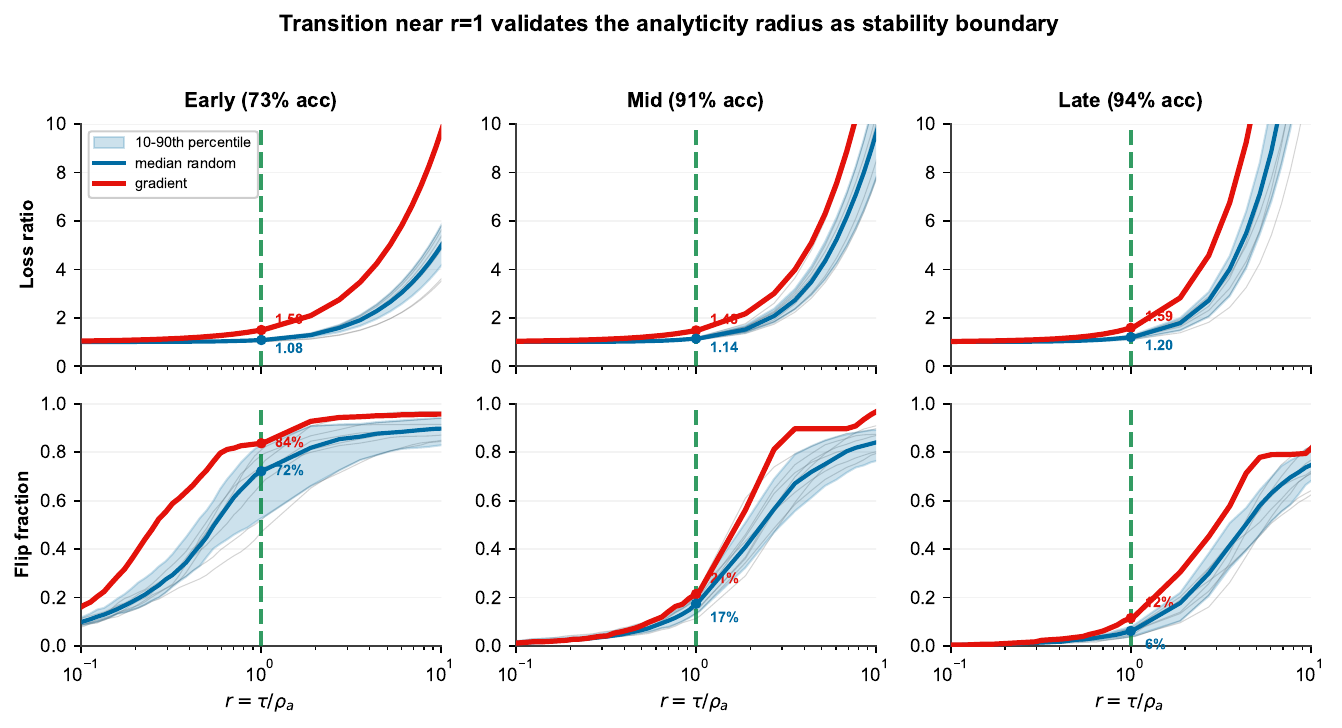}
\caption{Random-direction sweeps. Both loss ratio and flip fraction
transition near $r \approx 1$ across all tested phases, supporting $r$ as
direction-independent in these settings.}
\label{fig:random}
\end{figure*}

The previous tests perturbed along the gradient. Since $\rho_a = \pi/\Delta_a$
depends on direction through $\Delta_a$, we test whether normalizing by this
direction-specific radius yields a universal coordinate.
Figure~\ref{fig:random} sweeps $r$ along the gradient and along 20 random
directions at three training phases.

Both metrics---loss ratio (post-step loss divided by pre-step loss) and flip
fraction (proportion of predictions that change class)---transition near
$r \approx 1$ regardless of direction. The gradient direction is the extreme
case: at the same $r$, it shows earlier degradation than random directions.
Adversarial directions that minimize $\rho_a$ for fixed $\tau$ remain
unexplored. Nevertheless, the direction-dependent slack confirms that
$\rho_a$ is a conservative bound across all tested directions.

\subsection{Temperature-Scaling Fingerprint}

\begin{figure*}[!tb]
\centering
\includegraphics[width=\textwidth]{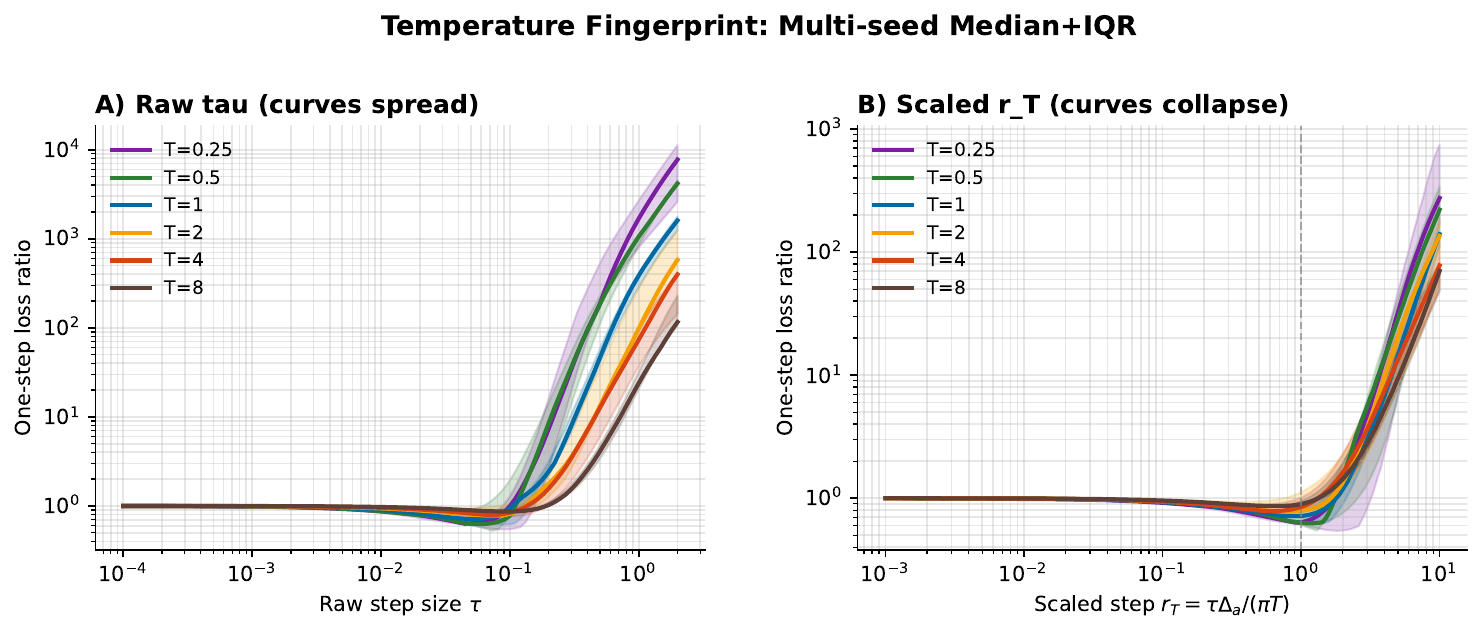}
\caption{Temperature-scaling fingerprint test. One-step loss inflation
(post-step loss / pre-step loss) is
measured across temperatures $T\in\{0.25,0.5,1,2,4,8,16,64\}$.
\textbf{A}: plotted against raw step size $\tau$, collapse thresholds vary
substantially across $T$ (collapse std $=0.992$). \textbf{B}: plotted against
the theory-normalized coordinate
$r_T=\tau\Delta_a/(\pi T)=\tau/\rho_a(T)$, curves align much more tightly
(collapse std $=0.164$).}
\label{fig:tempfingerprint}
\end{figure*}

The preceding tests confirmed the bound's shape; we now test a quantitative
prediction. Temperature rescales logits as $z/T$, so the predicted radius
rescales as $\rho_a(T)=\pi T/\Delta_a$.
Figure~\ref{fig:tempfingerprint} tests this prediction directly. In raw $\tau$
coordinates (panel~A), collapse onsets vary widely across temperatures
(collapse std $= 0.992$). After theory normalization by $T$ (panel~B),
those onsets collapse to a common transition (std $= 0.164$, a $6\times$
reduction).

This is a mechanistic check, not just a fit: changing $T$ shifts the boundary
in the direction and magnitude predicted by the analytic bound. The remaining
spread is expected from confidence-margin slack ($\rho^*>\rho_a$ for confident samples)
and nonlinear deviations from logit linearization.

\subsection{Controller Across Architectures}

\begin{figure*}[!tb]
\centering
\includegraphics[width=\textwidth]{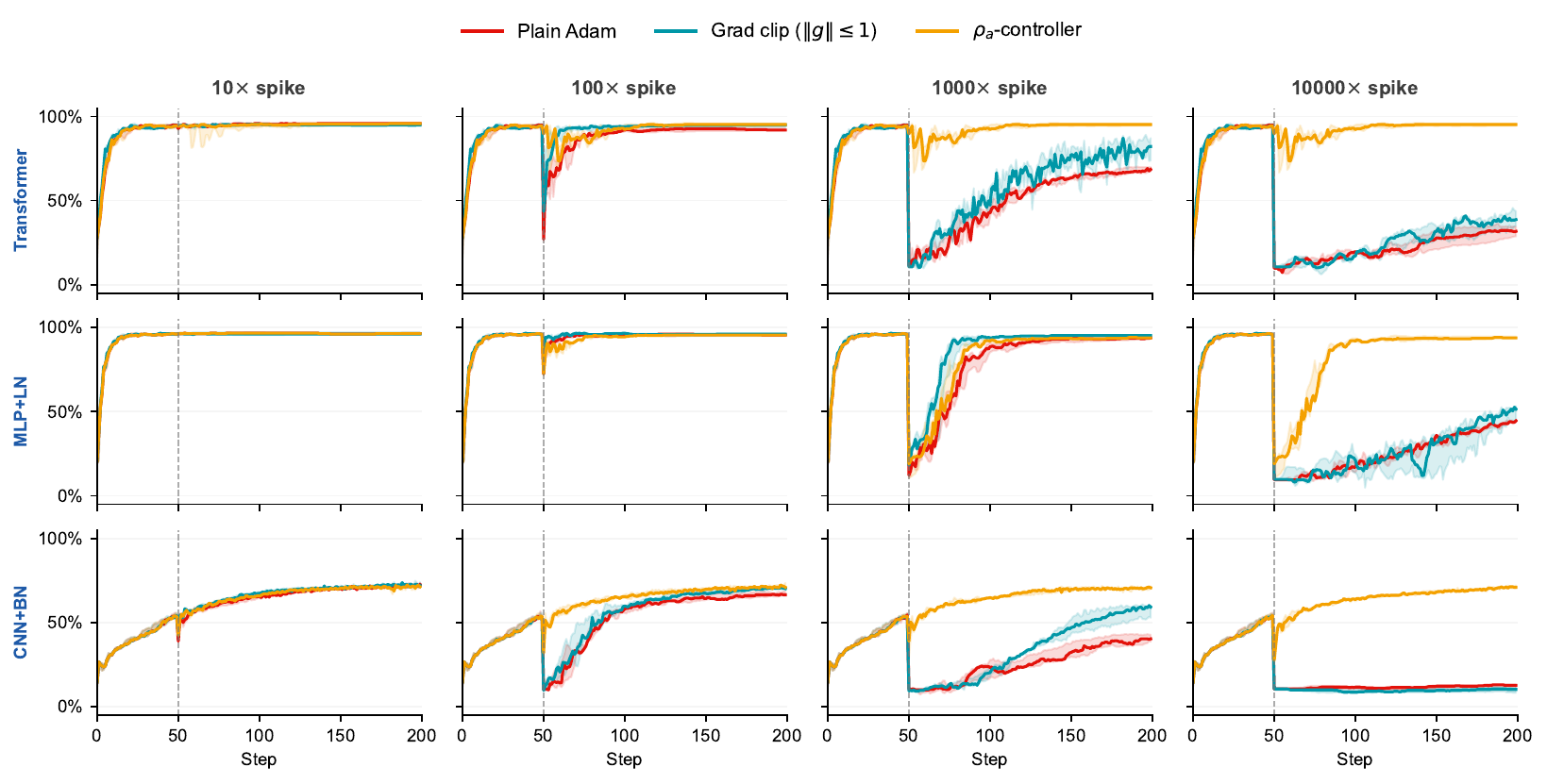}
\caption{Controller across architectures and spike severities
(5 seeds, median with IQR bands). Each column increases the spike
magnitude from $10\times$ to $10\,000\times$. Red: plain Adam. Teal:
Adam with gradient clipping ($\|g\|\le 1$). Gold: $\rho_a$-controller.
Gradient clipping helps partially at moderate spikes but still collapses
at high severity. The $\rho_a$-controller survives all conditions.}
\label{fig:archgrid}
\end{figure*}

Having validated the bound's shape, direction-independence, and temperature
scaling, we return to the controller. Extending the spike test to broader
architectures, we compare the $\rho_a$-controller against plain Adam and
gradient clipping ($\|g\| \le 1$).
Figure~\ref{fig:archgrid} tests three methods across three architectures
and four spike severities. As the spike magnitude increases, plain Adam (red)
fails progressively: the Transformer recovers from $10\times$ but not
$1000\times$; MLP with LayerNorm (MLP+LN) tolerates $100\times$ but collapses at $10\,000\times$;
a convolutional network with BatchNorm (CNN+BN) shows permanent collapse even at $10\times$ because BatchNorm's running
statistics become corrupted.

Gradient clipping (teal, threshold~1) provides partial protection---it
improves on plain Adam at moderate spikes---but it applies a uniform norm
threshold that does not adapt to the local radius $\rho_a$. At
$10\,000\times$, clipping still leaves Transformer and CNN+BN in high-loss
states. The $\rho_a$-controller (gold) survives all conditions by capping
$\tau \le \rho_a$.

\begin{table}[!tb]
\centering
\caption{Median final test accuracy at $10\,000\times$ spike.
A seed is divergent if final loss exceeds~10.}
\label{tab:archgridclip}
\begin{tabular}{lccc}
\toprule
Arch & Plain & Clip & $\rho_a$-ctrl \\
\midrule
Transformer & 31.8\% & 38.7\% & 95.3\% \\
MLP+LN & 44.6\% & 51.5\% & 93.9\% \\
CNN+BN & 13.4\% & 10.3\% & 71.6\% \\
\bottomrule
\end{tabular}
\end{table}

Table~\ref{tab:archgridclip} quantifies the $10\,000\times$ column.
Gradient clipping improves MLP+LN to $51.5\%$ but still falls far short of
the $93.9\%$ reached by the $\rho_a$-controller. For Transformer and CNN+BN,
clipping fails entirely. The geometric difference is the key: clipping
imposes a fixed norm threshold, whereas the $\rho_a$-controller scales the
update against the local radius. When the safe radius contracts unevenly,
a fixed threshold can still permit $r > 1$.

\subsection{Transformer Layer Analysis}
\label{sec:transformer}

\begin{figure*}[!tb]
\centering
\includegraphics[width=\textwidth]{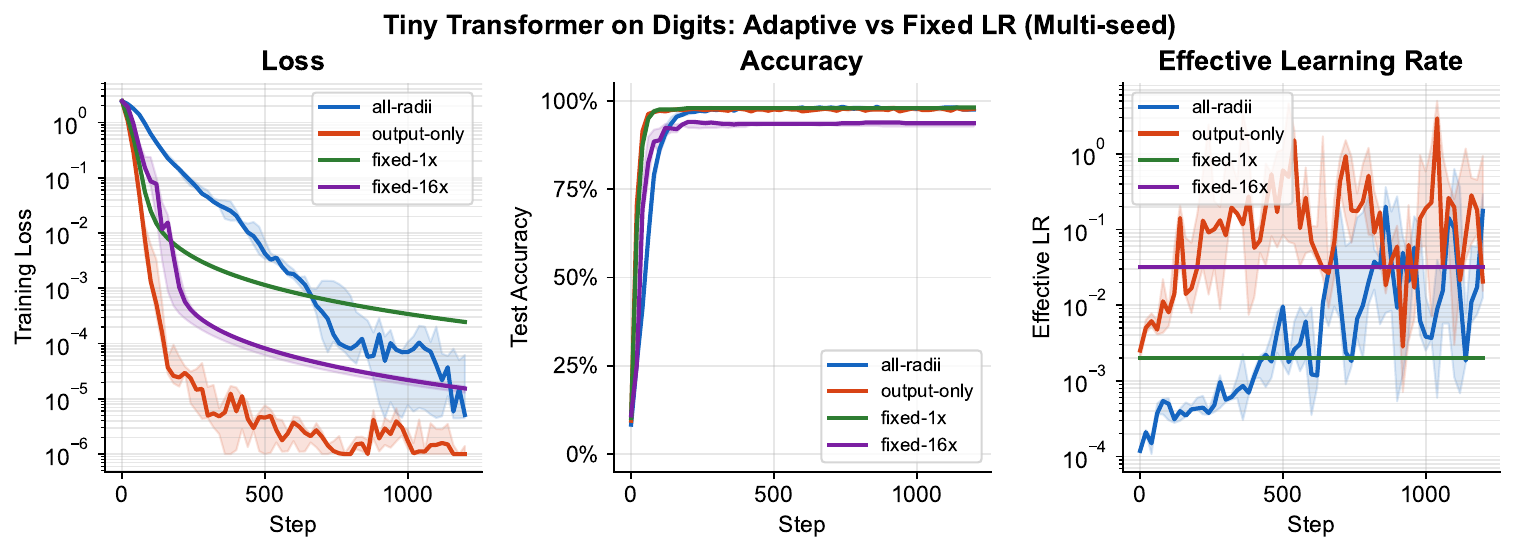}
\caption{Tiny transformer on Digits: adaptive vs fixed learning rates
(5 seeds, median with IQR bands).
Left: training loss. Center: test accuracy. Right: effective LR.
The adaptive controllers (all-radii, output-only) vary their step size
by orders of magnitude throughout training. Fixed LRs either converge
slowly (1$\times$) or diverge (16$\times$). The most conservative
strategy (all-radii, using $\min(\rho_{\text{out}},\rho_{\text{attn}},\rho_{\text{ffn}})$)
reaches the lowest loss and highest accuracy.}
\label{fig:transformer-radii}
\end{figure*}

The preceding experiments used a single output-logit radius. The checked-in
transformer artifact behind \Cref{fig:transformer-radii} compares four
strategies on a tiny transformer ($d=32$, 2 layers) trained on the Digits
dataset: two adaptive controllers and two fixed learning rates, with the
highest fixed rate set to $16\times$ the lowest. In that artifact, the
all-radii controller uses three local radii:
\begin{itemize}
\item $\rho_{\text{out}} = \pi / \Delta_{\text{out}}$: output logit
  spread (same as \Cref{sec:radius}).
\item $\rho_{\text{attn}} = \pi / \Delta_{\text{attn}}$: pre-softmax
  attention-logit spread (minimum across heads).
\item $\rho_{\text{ffn}} = Q_{0.01}(|h|/|\dot h|)$: a conservative
  FFN kink-distance proxy, defined as the $1\%$ quantile of
  $|h|/|\dot h|$ over FFN preactivations. This is the only included
  experiment that augments the softmax radii with a non-softmax term;
  see \Cref{sec:activations}.
\end{itemize}
The network radius is
$\rho_{\text{net}} = \min(\rho_{\text{out}}, \rho_{\text{attn}},
\rho_{\text{ffn}})$ for the all-radii controller, while the output-only
controller uses only $\rho_{\text{out}}$.

The right panel shows that the effective learning rate allowed by the
convergence-radius bound changes by orders of magnitude throughout training and
oscillates unpredictably. No fixed schedule can track this variation. A rate
that works at step~50 may violate the bound by step~150---heuristics work
until they fail.

The left panel shows the consequence. Fixed learning rates face an impossible
tradeoff: 1$\times$ converges slowly, while 16$\times$ diverges. The adaptive
controllers avoid this tradeoff by tracking the local radius. Remarkably, the
most conservative strategy
($\min(\rho_{\text{out}}, \rho_{\text{attn}}, \rho_{\text{ffn}})$)
reaches the lowest loss. Conservative steps compound; aggressive steps erase
progress.

\begin{table}[!tb]
\centering
\scriptsize
\setlength{\tabcolsep}{3pt}
\caption{Which radius binds in the all-radii transformer run
(\Cref{fig:transformer-radii}; 5 seeds, 61 logged evaluations per seed).
Counts report how often each term attained
$\rho_{\text{net}} = \min(\rho_{\text{out}}, \rho_{\text{attn}},
\rho_{\text{ffn}})$ within each training phase. Median radii are pooled over
all logged points in that phase.}
\label{tab:transformer-bottlenecks}
\begin{tabular}{lcccccc}
\toprule
Phase & FFN & Out & Attn & med.\ ffn & med.\ out & med.\ attn \\
\midrule
Early & 94/100 & 6/100 & 0/100 & 0.0164 & 0.0307 & 0.645 \\
Middle & 48/100 & 52/100 & 0/100 & 0.0111 & 0.0107 & 0.203 \\
Late & 8/105 & 97/105 & 0/105 & 0.0131 & 0.00793 & 0.168 \\
\bottomrule
\end{tabular}
\end{table}

\Cref{tab:transformer-bottlenecks} shows that the bottleneck pattern in this
artifact is not attention-first. Across all 5 seeds and 61 logged evaluations
per seed, $\rho_{\text{attn}}$ never attains the minimum. Early training is
FFN-limited, the middle third mixes FFN and output bottlenecks, and late
training is overwhelmingly output-limited. Every seed starts FFN-limited and
ends output-limited. So the observed transition in this artifact is
FFN early $\rightarrow$ output late.

\subsection{Natural Instability Detection}
\label{sec:natural}

\begin{figure*}[!tb]
\centering
\includegraphics[width=\textwidth]{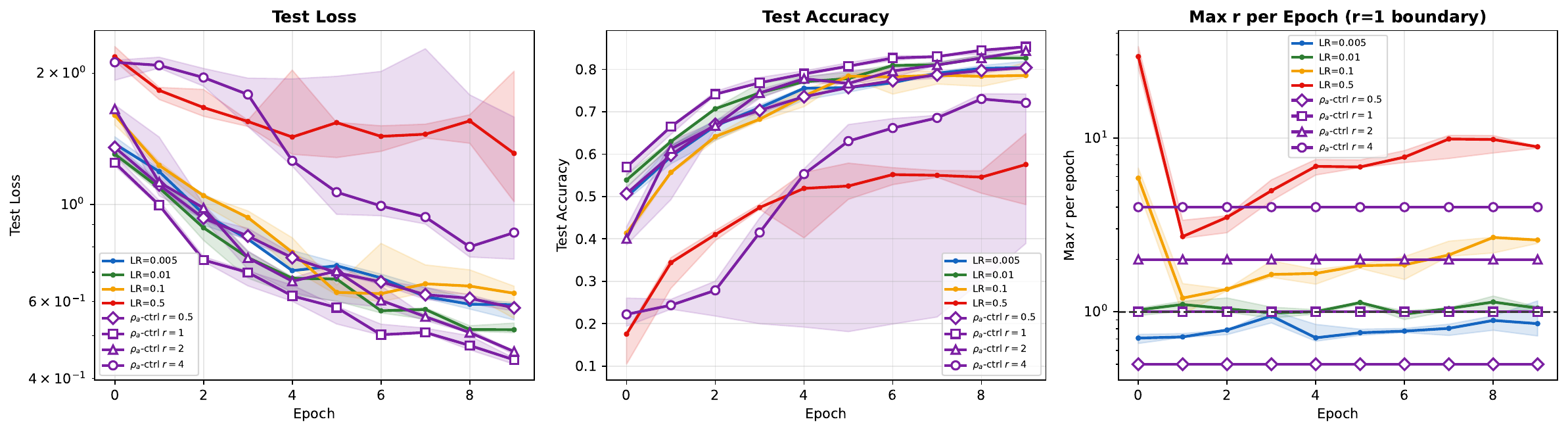}
\caption{Natural instability detection: ResNet-18 on CIFAR-10 with
SGD+momentum at four fixed learning rates and four $\rho_a$-controller
targets ($r{=}0.5, 1, 2, 4$; 5 seeds, median with IQR bands).
\textbf{Left}: test loss. \textbf{Center}: test accuracy.
\textbf{Right}: maximum $r = \tau/\rho_a$ per epoch with the $r{=}1$
boundary (dashed). Step size $\tau$ includes the momentum buffer.
The controller at $r{=}1$ reaches the highest accuracy ($85.3\%$);
$r{=}2$ remains stable; $r{=}4$ degrades with high variance. Even
$\eta{=}0.005$ occasionally exceeds $r{=}1$ due to momentum.}
\label{fig:cifarnatural}
\end{figure*}

The preceding experiments validate $r$ through artificial perturbations:
injected LR spikes, single-step sweeps, and controlled temperature shifts.
A natural question is whether $r$ also tracks instabilities that arise
organically during standard training---without any injected spike.

Figure~\ref{fig:cifarnatural} tests this on a larger-scale setting: ResNet-18
trained from scratch on CIFAR-10 with SGD and momentum~$0.9$. Four learning
rates span from conservative ($\eta = 0.005$) to aggressive ($\eta = 0.5$),
with $r = \tau/\rho_a$ logged every 50 steps using a finite-difference
estimate of $\rho_a$ on a current-batch subset (up to 256 samples).
The step size $\tau = \eta\|v\|$ uses the full momentum-corrected update
vector $v$, not the raw gradient, and we train for 10 epochs to focus on the
$r$--accuracy correlation (longer runs remain future work;
Section~\ref{sec:discuss}).

The results separate cleanly by $r$. At $\eta = 0.005$, the ratio stays
near~1 and median final accuracy is $80.4\%$. At $\eta = 0.01$,
occasional violations appear and accuracy reaches $82.6\%$. At $\eta = 0.1$,
violations are frequent and accuracy drops to $78.5\%$. At $\eta = 0.5$,
the run sits deep in the $r > 1$ regime and degrades to $57.5\%$.
Notably, even $\eta = 0.005$ exceeds $r{=}1$ once due to momentum
amplification---no fixed LR is unconditionally safe. Can the controller do
better?

Four $\rho_a$-controller targets confirm the bound's role. Crucially, the
controller uses no base learning rate: it sets $\eta = r\,\rho_a / \|v\|$
from the local geometry and the chosen target~$r$, replacing a base-LR
choice with a dimensionless aggressiveness target.
At $r{=}1$, the controller reaches the highest accuracy ($85.3\%$), exceeding
every fixed LR. At $r{=}0.5$ (conservative), it matches the best fixed rate
($80.4\%$). At $r{=}2$, accuracy remains strong ($84.4\%$), consistent with
the confidence-margin slack observed in the single-step experiments. At
$r{=}4$, accuracy drops to $72.1\%$ with large seed-to-seed variance
(IQR $38.9$--$74.2\%$), confirming that sufficiently exceeding the bound is
genuinely harmful.

Three aspects strengthen this result. First, no perturbation was injected---the
instabilities emerge from the natural interaction of learning rate, gradient
magnitude, and local loss geometry. Second, the optimizer is SGD rather than
Adam, so the same qualitative pattern appears outside the Adam-based spike
tests. Third, the shrinking-radius story still appears: as training sharpens
predictions, $\rho_a$ contracts and aggressive fixed learning rates move into
progressively higher-$r$ regimes later in training.
Table~\ref{tab:cifarnatural} quantifies the separation.

\begin{table}[!tb]
\centering
\caption{ResNet-18 CIFAR-10 natural instability summary (5 seeds, 10
epochs, momentum-corrected $\tau$). The controller at $r{=}1$ is optimal;
$r{=}2$ retains accuracy from confidence-margin slack; $r{=}4$ degrades.
Even $\eta{=}0.005$ exceeds $r{=}1$ due to momentum.}
\label{tab:cifarnatural}
\begin{tabular}{lcccc}
\toprule
Method & max $r$ & $r{>}1$ & Acc (med.) & IQR \\
\midrule
$\eta{=}0.005$ & 1.11 & 1  & 80.4\% & 80.0--81.9\% \\
$\eta{=}0.01$  & 1.23 & 13 & 82.6\% & 82.6--83.4\% \\
$\eta{=}0.1$   & 5.87 & 58 & 78.5\% & 78.1--81.7\% \\
$\eta{=}0.5$   & 29.34 & 70 & 57.5\% & 48.1--64.8\% \\
\midrule
$\rho_a$-ctrl ($r{=}0.5$) & 0.50 & 0  & 80.4\% & 80.2--80.5\% \\
$\rho_a$-ctrl ($r{=}1$) & 1.00 & 0$^\dagger$ & 85.3\% & 85.1--85.4\% \\
$\rho_a$-ctrl ($r{=}2$) & 2.00 & 80 & 84.4\% & 84.3--84.5\% \\
$\rho_a$-ctrl ($r{=}4$) & 4.00 & 80 & 72.1\% & 38.9--74.2\% \\
\bottomrule
\multicolumn{5}{l}{\footnotesize $^\dagger$ Strict exceedances are machine-precision noise only ($<10^{-15}$).} \\
\end{tabular}
\end{table}

No fixed rate keeps $r$ below~1 throughout training once momentum is
accounted for. As violations accumulate, accuracy degrades monotonically. The controller
results bracket the bound: $r{=}1$ is optimal, $r{=}2$ retains accuracy from
confidence-margin slack, and $r{=}4$ collapses.

\paragraph{Summary.}
The normalized step size $r = \tau/\rho_a$ reliably predicts instability in
our tested architectures and directions: $r < 1$ remained safe throughout
these experiments. The bound generalizes across architectures (six tested),
directions (gradient and 20 random), and optimizers (Adam and SGD), and
temperature normalization collapses onset thresholds with a $6\times$
reduction in spread. Beyond controlled perturbations, $r$ tracks instability
in natural ResNet-18/CIFAR-10 training without any injected spike, and
controller targets that bracket the bound ($r{=}0.5, 1, 2, 4$) show that
$r{=}1$ is optimal while degradation sets in beyond $r{=}2$. The bound is
conservative---most models survive to $r > 1$---but that conservatism is
explained by confidence-margin slack. Exploiting this bound, the
$\rho_a$-controller survives adversarial LR spikes up to $10{,}000\times$,
including regimes where gradient clipping collapses.

\section{Related Work}
\label{sec:related}

We connect several strands of optimization and deep-learning theory through a
single quantity: the \textbf{computable convergence-radius bound} $\rho_a$.
Each strand captures part of why training fails, but none identifies a common
mechanism or explains why instabilities erupt on apparently smooth landscapes.
The convergence-radius bound serves as both a theoretical lens and a cheap
diagnostic.

\paragraph{Stability and curvature.}
Cohen et al.~\cite{cohen2021gradient} observe the \emph{edge of
stability}: gradient descent keeps $\lambda_{\max}(H)\approx 2/\eta$.
Gilmer et al.~\cite{gilmer2022loss} link curvature spikes to
instability but use empirical sharpness rather than the underlying
analytic structure.
Classic theory demands global $L$-smoothness ($\eta<2/L$) or local
variants~\cite{zhang2020adaptive} that scale $L$ with the gradient
norm. These bounds control the quadratic approximation error. The convergence
radius controls the entire Taylor series rather than only its quadratic error.
That shifts the question from ``how wrong is the local model?''\ to ``does the
series converge at all at the proposed step?''

\paragraph{Step-size control.}
Natural-gradient methods~\cite{amari1998natural} and trust-region updates use
Fisher information or KL divergence to stay inside reliable regions; the
convergence-radius bound formalizes a similar idea in function space at the
cost of one JVP. Adam~\cite{kingma2015adam} and gradient
clipping~\cite{zhang2020adaptive,pascanu2013difficulty} treat the
\emph{symptoms} of instability without exposing the cause.
Classical rules such as Polyak's method~\cite{polyak1964some} and
Armijo line search~\cite{armijo1966minimization} adapt step length through
objective values or sufficient-decrease tests on~$\mathbb{R}$.
Our controller differs in mechanism: it estimates distance to a complex
singularity before the step, so the governing variable is not a descent test
on $\mathbb{R}$ but the analyticity boundary itself.

\paragraph{Scaling and parameterization.}
Maximal update parametrization ($\mu$P) and $\mu$Transfer~\cite{yang2022tensor}
aim to preserve useful hyperparameters across width and model scale. That is
complementary rather than competing with our goal. $\mu$P addresses how to
transfer hyperparameters across a family of models; $\rho_a$ measures local
step safety within one fixed model and training run.

\paragraph{Empirical training dynamics.}
Chowdhery et al.~\cite{chowdhery2022palm} report sudden loss spikes during
PaLM training; Touvron et al.~\cite{touvron2023llama} document similar shocks
in LLaMA training. Our interpretation is that these reflect steps that exceed
a shrinking~$\rho_a$.
Lewkowycz et al.'s ``catapult phase''~\cite{lewkowycz2020large}---a
brief divergence followed by convergence to flatter minima---is
consistent with $r$ moderately above~1 in our experiments. Very large $r$
values are much more likely to collapse runs.

\paragraph{Softmax and cross-entropy dynamics.}
Agarwala et al.~\cite{agarwala2020temperature} show that early learning under
softmax cross-entropy is governed by inverse temperature and initial logit
scale. Balduzzi et al.~\cite{balduzzi2017neural} analyze optimization in
rectifier networks through neural Taylor approximations.
Haas et al.~\cite{haas2025surprising} identify a ``controlled divergence''
regime: under cross-entropy, logits can grow unboundedly while loss, gradients,
and activations remain stable. Our framework offers a mechanism for this
phenomenon---partition zeros sit at fixed imaginary distance~$\pi$ regardless
of logit magnitude, so the loss surface appears smooth on~$\mathbb{R}$ even as
the convergence radius shrinks. To our knowledge, none of these works use the
distance to complex partition zeros as a step-size variable.

\paragraph{Theoretical foundations.}
The step-size limit stems from classical bounds on zeros of exponential sums
$\sum w_k e^{a_k t}$. The Lee--Yang theorem~\cite{leeyang1952} and extensions
by Ruelle~\cite{ruelle1971extension}, Tur\'{a}n~\cite{turan1953}, and
Moreno~\cite{moreno1973leeyang} locate such zeros. In numerical linear
algebra, Sao~\cite{sao2025trace} showed that convergence-radius constraints on
moment-generating functions cap trace-power estimators for log-determinants.
We transfer this insight to optimization: the partition function
$F(\tau)=\sum w_k e^{a_k\tau}$ underlying cross-entropy has complex zeros, and
those zeros set the Taylor convergence radius and hence the safe step size.

\paragraph{Singular learning theory.}
Watanabe's singular learning
theory~\cite{watanabe2010asymptotic,watanabe2013wbic} and related
work~\cite{yamada2011statistical} use zeta functions and analytic continuation
to characterize model complexity via poles of a learning coefficient (the real
log-canonical threshold). This shares our language of singularities and analytic
continuation but addresses a different question: generalization and model
selection, not optimizer step control. Their singularities are
algebraic---poles of a zeta function encoding parameter-space geometry. Ours are
transcendental---zeros of the partition function $F(\tau)$ in the complex
step-parameter plane.

\paragraph{Singularity-aware step control.}
The closest precedent lies outside machine learning, in numerical continuation
methods. Verschelde and Viswanathan~\cite{verschelde2022closest} use Fabry's
ratio theorem as a ``radar'' to detect nearby complex singularities and adapt
step size in homotopy path tracking. Telen et
al.~\cite{telen2020robust} use Pad\'{e} approximants for the same purpose:
locate the nearest singularity and set the trust region accordingly.
Timme~\cite{timme2021mixed} builds path-tracking controllers that exploit
distance to the closest singularity for precision and step-size decisions.
Our work transfers this principle---measure singularity distance, bound the
step---from polynomial system solving to neural network optimization, with
partition-function zeros of cross-entropy as the relevant singularities.

\paragraph{Continual learning.}
Elastic Weight Consolidation~\cite{kirkpatrick2017overcoming} and
natural-gradient methods~\cite{amari1998natural} guard prior knowledge with
Fisher-based importance. The convergence-radius bound offers a function-space
view: learned logits create singularities that shrink the region of reliable
updates. The scalar $\rho_a=\pi/\Delta_a$---one JVP---measures the remaining
headroom for safe learning.

\section{Discussion and Conclusion}
\label{sec:discuss}

\subsection{Summary}

We showed that cross-entropy loss carries a geometric constraint that standard
smoothness analysis does not capture: complex singularities of the partition
function cap the Taylor convergence radius along every update direction. Under
linearized logits---a tractability choice that makes the constraint both
computable and interpretable---the radius yields the bound
$\rho_a = \pi/\Delta_a$ (one Jacobian--vector product). For binary
cross-entropy the exact radius is
$\rho^* = \sqrt{\delta^2 + \pi^2}/\Delta_a$; the bound $\rho_a$ is the
worst case ($\delta = 0$) and applies to $n$ classes. The bound is
conservative: no tested architecture failed below it, but some survive
above it. As training sharpens predictions, $\Delta_a$ grows and the bound
tightens---explaining why late-training updates become fragile even as the
loss surface appears flatter on $\mathbb{R}$.

This bound has a simple interpretation. The loss landscape remains smooth on
$\mathbb{R}$, but beyond the convergence radius the Taylor series diverges, so
adding more terms worsens the approximation rather than improving it.
Step-size guarantees that rely on local polynomial extrapolation therefore lose
predictive power beyond $\rho_a$. Standard $L$-smoothness descent lemmas remain valid under their own
assumptions but do not encode this singularity geometry---$\rho_a$ captures
a separate, often stricter, constraint.

\subsection{Interpretation}
\label{sec:interpret}

\paragraph{Single-step hazard vs multi-step dynamics.}
Our bounds describe one-step reliability. They quantify local hazard, not full
training dynamics. A run can survive occasional high-hazard updates if the
iteration is self-correcting; however, repeated high-hazard updates can
accumulate through feedback and destabilize training. This distinction matters
for interpretation: $r$ is a risk scale, while long-run divergence depends on
how $r$ interacts with data conflict, architectural slack, and optimizer
dynamics.

\paragraph{Global controls vs local stability.}
Temperature and learning-rate schedules both act as indirect global controls,
while instability is governed by the local normalized step $r=\tau/\rho_a$.
When control targets the wrong variable (raw $\tau$ or a fixed global
$T$), updates can still violate $r>1$ on hard batches or directions. The
temperature-fingerprint experiment (Section~\ref{sec:control}) confirms this:
a fixed $T$ provides no per-batch safety guarantee, but normalization by
$\rho_a$ collapses onset thresholds across temperatures.

\subsection{Why One-Step Metrics Matter for Instability}

The one-step metrics we report (loss inflation, retained accuracy, and flip
fraction) directly measure whether an update preserved or damaged local model
reliability. Although these metrics capture single-step behavior, they matter
for iterative training because each update sets the next model state and the
next gradient. Large one-step disruptions can produce decision-boundary churn
that amplifies over subsequent updates.

This is why the transition near $r \approx 1$ is important: it provides a
practical operating boundary. We do not claim that $r>1$ always causes
divergence. Rather, crossing this boundary makes harmful one-step events more
likely, which increases instability risk when feedback dynamics are
unfavorable.

Together, these two levels connect theory and practice: $\rho_a$ provides a
computable local reliability scale, while one-step metrics show the
observable consequences of violating that scale.

\subsection{Why Training Tightens the Bound}

During training, the model separates logits to sharpen predictions,
causing $\Delta_a$ to grow. Since $\rho_a = \pi/\Delta_a$, the safe radius
shrinks. A learning rate that met $r < 1$ early in training may violate
this condition later.

Concretely, $\rho_a \approx 3.14$ when $\Delta_a = 1$, drops to $0.31$ when
$\Delta_a = 10$, and to $0.10$ when $\Delta_a = 30$.
With a fixed $\eta = 0.01$ and $\|\nabla f\| = 10$, the step
$\tau = 0.1$ gives $r = 0.03$ early but $r = 1.0$ later.

Rather than arbitrary ``cooling down,'' learning rate decay tracks the
contracting safe region. Warmup serves the opposite role. Early in training,
$\Delta_a$ is small (logits are similar), so $\rho_a$ is large. However, step
norms $\|p\|$ can also be large. Warmup caps $\eta$ to prevent
$\tau = \|p\|$ from exceeding this generous radius before updates stabilize.

\subsection{Hessian Intuition Versus Radius Geometry}

\begin{figure}[t]
\centering
\includegraphics[width=\columnwidth]{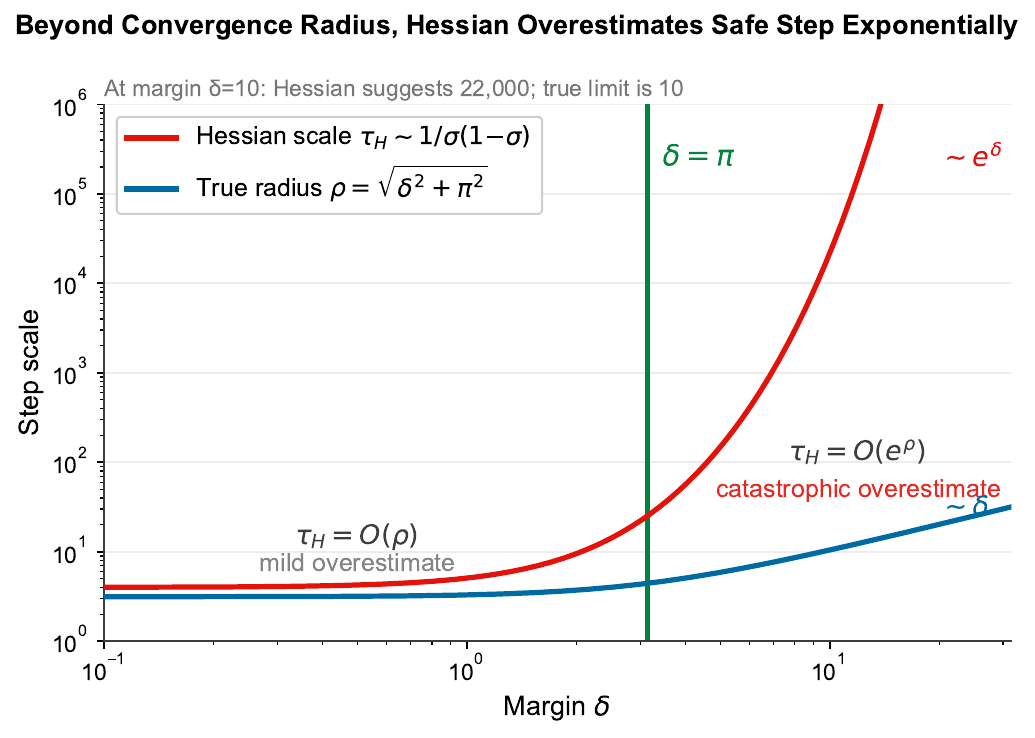}
\caption{Hessian curvature $\sigma(1-\sigma) \sim e^{-\delta}$ vanishes
exponentially while the radius bound $\pi/\sqrt{\delta^2+\pi^2}$ decays
only algebraically. At $\delta = 10$, the mismatch exceeds $4000\times$.}
\label{fig:crossover}
\end{figure}

Figure~\ref{fig:crossover} is best read as an interpretation of the binary
theory, not as a separate empirical result. For scalar logistic loss at margin
$\delta$, Hessian curvature decays like $\sigma(\delta)(1-\sigma(\delta))
\sim e^{-\delta}$, while the exact binary convergence radius from
Theorem~\ref{thm:binary} scales only like $\sqrt{\delta^2+\pi^2} \sim \delta$.
So a curvature-only view says confident examples become flatter and therefore
safer, whereas the radius view says Taylor reliability can still tighten
because the relevant complex singularities remain at imaginary distance $\pi$.

This resolves the apparent paradox from training practice: the loss surface can
look flatter on $\mathbb{R}$ even as updates become more fragile. Flatness on
the real line and distance to complex singularities are different notions of
safety, and cross-entropy forces them apart.

\subsection{Limitations}

\paragraph{Linearization.} Linearization is a tractability choice, not the
source of the phenomenon. The true loss $\ell(\tau)$ has a convergence radius
set by its own complex singularities regardless of how logits depend
on~$\tau$. Linearization yields the closed-form bound
$\rho_a = \pi/\Delta_a$ and reveals the controlling variable; for deep
networks where logits curve over the step, the true radius may be larger
or smaller than this estimate. Empirical validation
(Section~\ref{sec:control}) shows that the linearized bound remains
predictive in the tested settings.

\paragraph{Scale of validation.}
Our motivating examples come from frontier-scale training. As is standard for
mechanism-identification work, the direct evidence here uses controlled
small-scale systems (plus ResNet-18/CIFAR-10 and a tiny transformer) where
variables can be isolated. Testing the diagnostics at frontier scale is an
immediate next step.

\paragraph{Bound vs exact.} The lower bound $\rho_a = \pi/\Delta_a$ is
conservative by construction---it is the worst-case (balanced-logit)
specialization of the exact binary radius. Conservativeness is expected: a
lower bound on a convergence radius should not over-promise safety. Tightening
the bound via the sample-specific logit gap $\delta$ is a natural refinement.

\paragraph{Overhead.} On ResNet-18/CIFAR-10 (batch~128, RTX~6000 Ada), a
baseline SGD+momentum step (forward + backward + optimizer) takes
12.6\,ms. Finite-difference $\rho_a$ estimation raises this to 20.9\,ms
($+66\%$); exact JVP to 28.7\,ms ($+129\%$). Finite differences are
approximate but cheaper; JVP is exact. These numbers are for ResNet-18
only; overhead on large-scale models remains to be measured.

\paragraph{Multi-step dynamics.} The radius bounds single-step reliability,
not training trajectories. Intermittent $r > 1$ can cause loss oscillations
that self-correct; divergence in our experiments required $r > 1$
consistently over several iterations (Section~\ref{sec:interpret}).
Whether a run survives depends on data quality, architecture, optimizer
momentum, and schedule---variables beyond the one-step geometry.

\paragraph{Generalization.} The Taylor radius constrains local update
reliability, not what the model learns. It does not, by itself, prevent
overfitting or predict generalization. These depend on data quality,
regularization, and architecture choices that the radius does not capture.

\paragraph{Scope.} This paper focuses on softmax singularities: output
logits and attention pre-softmax scores. Activation functions and
normalization layers introduce additional singularities
(\Cref{sec:activations}), which can become the bottleneck in
unnormalized networks. We do not systematically test activation-function
singularities, though the checked-in tiny-transformer artifact includes a
conservative FFN kink proxy $\rho_{\text{ffn}} = Q_{0.01}(|h|/|\dot h|)$
from \Cref{sec:activations} as a single data point. That experiment is
FFN-limited early and output-limited late, while the attention-softmax radius
never binds. All experiments use small models (up to 128 dimensions);
validation on large-scale production transformers is still needed.

\paragraph{Computation.} We provide preliminary overhead numbers for
computing $\rho_a$ via JVP and finite differences; optimizing this
computation is not the focus. The bound should be computed over all
training samples, but this can be expensive; mini-batch estimates are
practical but approximate.

\paragraph{Higher-order extensions.} Linearization is not the only possible
proxy. Quadratic, cubic, or Pad\'{e} logit models can refine the estimate
(requiring Hessian-vector products), but the linearized model is already
valuable: it gives a closed-form mechanism, a cheap diagnostic, and the right
controlling variable~$\Delta_a$. The underlying principle---that complex zeros
determine the radius---does not change.

\paragraph{Controller.} The $\rho_a$-controller demonstrates that the bound
is actionable, not merely diagnostic. Integrating it with Adam's momentum,
modern schedulers, and production-scale workloads are natural engineering
extensions.

\subsection{Takeaways}

The central finding is that cross-entropy training operates under a
convergence-radius constraint set by complex singularities of the loss---a
constraint invisible to standard real-variable smoothness analysis.
Three practical consequences follow:

\begin{enumerate}
\item \textbf{Diagnosis.} If training diverges, compute
$r = \|p\|/\rho_a$ at that step, where $p$ is the optimizer update and
$\rho_a$ uses a JVP along $p/\|p\|$. If $r > 1$, the instability was
predictable from the analytic structure.

\item \textbf{Monitoring.} Track $\rho_a = \pi/\Delta_a$ during training.
This requires one JVP per step, costing about as much as a forward pass.
A shrinking $\rho_a$ signals that the current learning rate may become
unsafe; decay it preemptively.

\item \textbf{Adaptive control.} Cap $\tau \le \rho_a$ to guarantee
$r \le 1$. This reduces sensitivity to the learning rate: a wide range of
learning rates can be made safe by capping with the radius.
\end{enumerate}

The bound is conservative---models often survive $r > 1$---but no
architecture in our small-scale experiments failed below it. Large-scale
validation is needed before production use. The deeper point is structural:
the geometric constraint is intrinsic to cross-entropy and does not disappear
with better optimization. The ghosts are always there; the question is whether
the step reaches them.

\subsection{Future Work}

The most immediate next step is large-scale validation on models such as
Pythia and OLMo. We could also integrate our method with Adam, using
$\rho_a$ to modulate the step size after momentum and second-moment scaling.
Alternatively, regularization that penalizes large $\Delta_a$ could maintain
a larger bound and offer an alternative to post-hoc learning-rate reduction.

\printbibliography

\appendix
\section{From Linearized Theory to Real Networks}
\label{app:bridge}

The true loss $\ell(\tau)$ has its own convergence radius, set by its
nearest complex singularity. The linearized-logit model is useful because it
yields a closed-form softmax lower bound and makes the governing
quantity~$\Delta_a$ explicit. This appendix studies when that first-order
proxy remains reliable in deeper nonlinear networks.

\subsection{From the True Radius to a Tractable Proxy}

The key approximation replaces the true (nonlinear) logit trajectories
$z_i(\tau) = z_i(\theta + \tau v)$ with their linearizations
$z_i(0) + a_i \tau$. This is exact when only the output layer moves;
for deeper layers it introduces error depending on logit curvature.

The true univariate loss $\ell(\tau) = f(\theta + \tau v)$ has logits that
are nonlinear functions of $\tau$. Writing
$z_i(\tau) = z_i(0) + a_i \tau + r_i(\tau)$ where
$|r_i(\tau)| \le C_i|\tau|^2/2$ and $C_i$ captures the logit
Hessian along $v$, the linearized exponential sum
$F_{\mathrm{lin}}(\tau) = \sum w_i e^{a_i \tau}$ is perturbed to
$F_{\mathrm{true}}(\tau) = \sum e^{z_i(\tau)}$.

\subsubsection{When linearization is exact}
If the output layer is linear ($z = Wh + b$) and only output-layer
parameters move, then $z_i(\tau)$ is exactly linear in $\tau$. In this
case, $\rho^*$ is the true radius. For deeper layers, the composition of
nonlinear activations makes $C_i > 0$.

\subsubsection{Perturbation of zeros}
The linearized $F_{\mathrm{lin}}$ has zeros at imaginary distance
$\pi/\Delta_a$ from the origin (Theorem~\ref{thm:general}). We can
bound how far these zeros shift under the nonlinear perturbation.
Factor $F_{\mathrm{true}}$ as:
\begin{equation}
F_{\mathrm{true}}(\tau) = F_{\mathrm{lin}}(\tau) \cdot
\Bigl(1 + \frac{F_{\mathrm{true}}(\tau) - F_{\mathrm{lin}}(\tau)}{F_{\mathrm{lin}}(\tau)}\Bigr)
\end{equation}
For $|\tau| \le \pi/\Delta_a$, each remainder satisfies
$|r_i(\tau)| \le C\pi^2/(2\Delta_a^2)$ where
$C = \max_i C_i$. Define the conservative perturbation parameter
$\varepsilon = C\pi^2/(2\Delta_a^2 \cdot \min_i w_i)$. By Rouch\'{e}'s
theorem~\cite{rouche1862memoire,conway1978functions}, if $\varepsilon < 1$ then $F_{\mathrm{true}}$ and
$F_{\mathrm{lin}}$ have the same number of zeros inside the strip
$|\mathrm{Im}(\tau)| < \pi/\Delta_a \cdot (1 - \varepsilon)$. This gives:

\begin{proposition}[Linearization quality]
\label{prop:linqual}
Let $v$ be the unit step direction,
$\Delta_a = \max_i a_i - \min_i a_i$ the logit-derivative spread,
$\rho_a = \pi/\Delta_a$ the conservative linearized bound, and
$\rho_{\mathrm{true}}$ the true convergence radius.
If the logit curvature $C = \max_i \|d^2 z_i/d\theta^2\|$
along $v$ satisfies
$\varepsilon = C\pi^2/(2\Delta_a^2 \cdot \min_i w_i) \ll 1$, then
$\rho_{\mathrm{true}} \ge \rho_a(1 - \varepsilon)$.
\end{proposition}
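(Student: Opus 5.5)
The plan is to avoid a delicate Rouch\'e count and instead rerun the half-plane argument of Theorem~\ref{thm:general} directly on $F_{\mathrm{true}}$, treating the nonlinear remainders $r_i(\tau)$ as phase perturbations. Write
\[
F_{\mathrm{true}}(\tau)=\sum_i w_i\,e^{a_i\tau}\,e^{r_i(\tau)},\qquad w_i=e^{z_i(0)},
\]
and take $\tau=u+iy$ in the open disk $|\tau|<\rho_a(1-\varepsilon)$. The goal is to show that all terms of this sum have arguments in an open arc of length $<\pi$. Lemma~\ref{lem:halfplane} then gives $F_{\mathrm{true}}(\tau)\neq 0$ throughout the disk. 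Since the linear term $-z_y$ contributes no singularities, Proposition~\ref{prop:radius} (applied now to the true logits) yields $\rho_{\mathrm{true}}\ge\rho_a(1-\varepsilon)$.

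The key steps are as follows.
\begin{enumerate}
\item Make the standing assumptions explicit. Each $z_i(\theta+\tau v)$ must extend analytically to the closed disk $|\tau|\le\pi/\Delta_a$, so that activation singularities are excluded and the only obstruction is zeros of $F_{\mathrm{true}}$. The remainder bound $|r_i(\tau)|\le C|\tau|^2/2$ must hold for \emph{complex} $\tau$ in that disk. I would state this as the meaning of $C$ along $v$, or derive it from a Cauchy estimate on a slightly larger disk.
\item Use the normalization freedom. Multiplying $F$ by a positive constant, i.e.\ shifting all logits equally, changes neither the zeros nor $\Delta_a$. Shift so that the smallest logit is $0$, which gives $\min_i w_i=1$. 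This is where the factor $\min_i w_i$ in $\varepsilon$ is absorbed; for unnormalized weights the stated $\varepsilon$ only becomes more conservative once $\min_i w_i\le 1$.
\item Compute the phase of term $i$, which is $a_i y+\operatorname{Im} r_i(\tau)$. The spread of these phases is at most $\Delta_a|y|+2\max_i|r_i(\tau)|$. For $|\tau|<\rho_a(1-\varepsilon)$ this is strictly less than
\[
\pi(1-\varepsilon)+\frac{C\pi^2}{\Delta_a^2}=\pi(1-\varepsilon)+2\varepsilon\min_i w_i \le \pi-\varepsilon(\pi-2)<\pi .
\]
The moduli $w_i e^{a_iu+\operatorname{Re} r_i}$ are positive and play no role.
\item Apply Lemma~\ref{lem:halfplane} with $c_k$ equal to these moduli to conclude nonvanishing, and hence the radius bound.
\end{enumerate}

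I expect step~1 to be the main obstacle. The proposition as stated quantifies curvature via a real Hessian norm, but the argument needs control of $r_i$ off the real axis at complex distance up to $\pi/\Delta_a$. This is a genuine analyticity hypothesis on the network, not a consequence of real smoothness. My first attempt would be to posit analyticity of the logits in a disk of radius $R>\pi/\Delta_a$. Cauchy's estimates then bound the second Taylor coefficient, and hence the remainder, by $M/R^2$-type quantities. I would then define $C$ accordingly and read ``$\varepsilon\ll1$'' as ``$\varepsilon<1$ with this complex-valid $C$''.

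Second, I would remark why the Rouch\'e route sketched above the statement is harder to make rigorous. Comparing zero counts requires a lower bound on $|F_{\mathrm{lin}}|$ on a contour, and $F_{\mathrm{lin}}$ can be small near the strip boundary. The phase argument sidesteps this, and it in fact gives the strip-wise statement as a byproduct.
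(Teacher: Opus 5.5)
Your proof is correct under your step-1 hypotheses. The paper also uses these tacitly when it applies $|r_i(\tau)|\le C|\tau|^2/2$ at complex $\tau$. Your route is genuinely different from the paper's.

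The paper factors $F_{\mathrm{true}}=F_{\mathrm{lin}}\bigl(1+(F_{\mathrm{true}}-F_{\mathrm{lin}})/F_{\mathrm{lin}}\bigr)$. It then appeals to Rouch\'e to conclude that $F_{\mathrm{true}}$ and $F_{\mathrm{lin}}$ have equally many (hence no) zeros in the strip $|\mathrm{Im}\,\tau|<(\pi/\Delta_a)(1-\varepsilon)$. That sketch has three holes:
\begin{itemize}
\item the comparison $|F_{\mathrm{true}}-F_{\mathrm{lin}}|<|F_{\mathrm{lin}}|$ is never verified on any contour;
\item the strip is unbounded, while the remainder bound $C|\tau|^2/2$ is not bounded on it;
\item neither the factor $\min_i w_i$ nor the shrink factor $1-\varepsilon$ is actually derived.
\end{itemize}
Rerunning Lemma~\ref{lem:halfplane} on $F_{\mathrm{true}}$ itself avoids all of this. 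You only need the imaginary parts of the $r_i$ to be small, and never a lower bound on $|F_{\mathrm{lin}}|$.

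A completed Rouch\'e argument would buy zero counting, i.e.\ two-sided information on where the perturbed ghosts sit. Your argument gives only nonvanishing, which is exactly what a lower bound on the radius needs.

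One overreach: the phase argument does not yield the strip-wise statement as a byproduct, because $C|\tau|^2/2$ is unbounded on a strip. What it gives is zero-freeness on the bounded set $\{\Delta_a|\mathrm{Im}\,\tau|+C|\tau|^2<\pi\}$, which contains your disk.

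Be explicit about step~2. Your argument never uses the moduli. So what it proves is the shift-invariant bound $\rho_{\mathrm{true}}\ge\rho_a(1-\varepsilon_0)$ with $\varepsilon_0=C\pi^2/(2\Delta_a^2)$, and this implies the stated bound only when $\min_i w_i\le 1$.

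That restriction cannot be removed, because the statement itself is not shift-invariant. Adding a constant $K$ to every logit leaves $\rho_{\mathrm{true}}$, $\rho_a$ and $C$ unchanged, but divides $\varepsilon$ by $e^K$. Here is a concrete counterexample:
\begin{itemize}
\item Take one parameter, $z_1=K+\tau+\tfrac{c}{2}\tau^2$ and $z_2=K$.
\item Then $\Delta_a=1$, $C=|c|$, and $|r_1(\tau)|=C|\tau|^2/2$ on all of $\mathbb{C}$, so every hypothesis you list holds.
\item The nearest zeros solve $\tau+\tfrac{c}{2}\tau^2=\pm i\pi$. This gives $\rho_{\mathrm{true}}=\pi-\tfrac38 c^2\pi^3+O(c^4)$, independent of $K$ and below $\pi$ for small $c\neq0$.
\item Meanwhile $\varepsilon=|c|\pi^2e^{-K}/2\to0$ as $K\to\infty$.
\item Numerically, for $c=0.1$ and $K=5$ one gets $\rho_{\mathrm{true}}\approx3.04<3.13\approx\rho_a(1-\varepsilon)$.
\end{itemize}
So do not describe the shift as ``absorbing'' the factor. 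Instead, state that the proposition holds with $\min_i w_i$ replaced by $\min(1,\min_i w_i)$, or with the factor dropped. As written, it is false when $\min_i w_i$ is large.
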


The criterion $\varepsilon \ll 1$ says: the logit Hessian must be small
relative to $\Delta_a^2$, after accounting for the weakest exponential
weight in the local partition sum. Residual connections help by keeping
$C$ moderate: skip connections keep the logit response approximately
linear. Networks without residual connections, such as very deep CNNs,
can have large $C$, degrading the approximation.

\paragraph{Why moderate error suffices.}
The instability condition $r = \tau/\rho_a \gg 1$ is robust to constant
factors in $\rho_a$. If the linearized radius is off by a factor of $2$,
the qualitative prediction (safe vs.\ dangerous) is unchanged whenever
$r > 2$ or $r < 0.5$. The theory does not need $\rho_a$ to be exact; it
needs the right order of magnitude.

\subsection{Multi-Step and Stochastic Extensions}

\subsubsection{Single-step suffices for detection}
The theory provides a conservative \emph{sufficient} condition for
one-step reliability: $\tau < \rho_a$ keeps the step inside the
guaranteed local convergence region of the linearized softmax model.
Crossing this bound places the update in a hazard regime where the local
Taylor approximation need not converge, but it is not by itself a theorem
of full-training divergence. Capping $\tau \le \rho_a$ at every step
therefore enforces a local one-step safety condition, not a proof of
global convergence.

The theory does not guarantee multi-step convergence. After a safe
step, $\rho_a$ may decrease (as confidence grows). The controller handles
this by recomputing $\rho_a$ at each step.

\subsubsection{Stochastic gradients}
Mini-batch noise changes $v$, hence $\Delta_a$. Empirically,
$\Delta_a$ varies little across batches (coefficient of variation
${\sim}1\%$), so stochastic fluctuation is negligible.

\subsection{Failure Modes}

\subsubsection{Non-analytic activations}
ReLU networks are piecewise linear, not analytic. In the ReLU models
tested in this paper, the output-side softmax bound $\rho_a$ still
tracked instability usefully, suggesting that softmax ghosts can remain
an informative hazard scale even when activation kinks are present.
That is an empirical observation about the tested architectures, not a
general theorem that softmax singularities always dominate piecewise
linear obstructions. A rigorous extension to piecewise analytic or
kinked networks remains open.

\subsubsection{Scale}
All experiments use small models (up to 128 dimensions, 10 classes).
For large-vocabulary models (50{,}000+ classes), $\Delta_a$ will be
larger, making $\rho_a$ smaller. Whether large language models (LLMs) operate perpetually at
$r \gg 1$ through favorable cancellation, or whether the linearization
slack scales favorably with vocabulary size, is unknown.

\subsubsection{Comparison to simpler heuristics}
The framework improves on a generic ``reduce LR when logits are large''
heuristic in two ways: (i)~it provides the specific formula
$\rho_a = \pi/\Delta_a$ with the principled threshold $r = 1$, and
(ii)~it connects this threshold to Taylor convergence, explaining
\emph{why} large derivative spreads cause instability. Whether this
quantitative threshold provides practical advantages over simpler rules
at production scale remains to be tested.

\section{Activation Singularities}
\label{sec:activations}

Section~\ref{sec:radius} derived the loss radius by linearizing the
logits $z_k(\tau) \approx z_k(0) + a_k\tau$ and locating zeros of the
partition function. That linearization assumes every component between
parameters and logits---including activations---is analytic. Activation
functions can have complex singularities of their own, and these can cap
the loss radius before the softmax ghosts do. We apply the same lens to
feed-forward network (FFN) activations, which can introduce softmax-like
ghosts into the network.

\subsection{Per-neuron radius}

For a scalar activation $\phi$, let
$\Sigma_\phi \subset \mathbb{C}$ denote the singular or nonanalytic set
of its complex continuation. Consider neuron~$j$ with preactivation
$h_j \in \mathbb{R}$. Along the update direction, it evolves as
$h_j(t) = h_j + t\,\dot{h}_j$, and $\phi(h_j(t))$ first becomes
singular when $h_j + t\,\dot{h}_j \in \Sigma_\phi$. Here
$\dot{h}_j = \frac{d}{dt}h_j(\theta + t v)\big|_{t=0}
= \nabla_\theta h_j \cdot v$ is the directional derivative of the
preactivation. We compute all $\dot{h}_j$ values simultaneously using
one forward-mode AD Jacobian--vector product along $v$.
The activation-limited directional radius for that neuron is
\begin{equation}
\rho_j
= \min_{s \in \Sigma_\phi}
  \frac{|s - h_j|}{|\dot{h}_j|},
\label{eq:neuronradius}
\end{equation}
and the FFN radius is $\min_j \rho_j$.
Since the loss depends on the activations through composition, a
singularity in any neuron propagates to the loss. Thus the overall
convergence radius cannot exceed $\min_j \rho_j$.

Under the same logit linearization used throughout this paper,
the activation radius and the softmax radius
$\rho_a = \pi/\Delta_a$ give two separate upper bounds.
A conservative combined bound is
$\min(\min_j \rho_j,\;\rho_a)$.

Note the asymmetry: $\rho_a = \pi/\Delta_a$ carries a factor of~$\pi$
because partition-function zeros lie at imaginary distance exactly~$\pi$
(the condition $e^{i\pi}=-1$). The per-neuron radius~$\rho_j$ has no
universal~$\pi$; its scale is set by whichever singularity in~$\Sigma_\phi$
lies nearest to the current preactivation. For sigmoid that distance happens
to be~$\pi$, for tanh it is~$\pi/2$, and for ReLU it can be arbitrarily
small. The two radii therefore measure different geometric constraints:
$\rho_a$ bounds how far one can step before exponential sums in the partition
function cancel, while $\rho_j$ bounds how far before a hidden-layer
nonlinearity encounters its own complex singularity.

Comparing activation families therefore reduces to one question: where
is $\Sigma_\phi$?

\subsection{Activation families}

\paragraph{Piecewise activations}
(ReLU, Leaky ReLU, parametric ReLU (PReLU), Hard-Swish, Hard-Sigmoid,
ReGLU---a gated linear unit (GLU) variant with ReLU gate).
$\Sigma_\phi$ lies on the real axis at the breakpoints.
The local radius is the distance to the nearest breakpoint:
\[
\rho_j = \min_{b \in \mathcal{B}_\phi}\frac{|h_j - b|}{|\dot{h}_j|},
\]
For ReLU the main breakpoint is $0$, giving
$\rho_j = |h_j|/|\dot{h}_j|$.
In normalized networks many preactivations lie near break surfaces, so
$\min_j \rho_j$ is often small. Piecewise-linear functions are not
analytic at their kinks: they lack a Taylor series there entirely. This
is a stricter failure than complex singularities off the real axis.
This class is cheap and works well in practice, but from
the convergence radius lens it is structurally the worst.

\paragraph{Sigmoid family}
This includes sigmoid, softplus, sigmoid linear unit
(SiLU/Swish)~\cite{elfwing2018silu}, and SwiGLU
(Swish-gated linear unit)~\cite{shazeer2020glu}.
The logistic function $\sigma(z) = 1/(1+e^{-z})$ has poles where
$1 + e^{-z} = 0$, i.e.\ $z = i\pi(2k+1)$ for integer~$k$; SiLU gates
built from it share these poles. Softplus
$\operatorname{softplus}(z) = \log(1+e^z)$ has branch points where
$1 + e^z = 0$, i.e.\ at the same locations. The nearest singularity
lies at imaginary distance $\pi$---the same ghost lattice as the
softmax partition zeros. This family is much better than ReLU, but it
still imposes a hard finite cap from the $i\pi(2k{+}1)$ lattice. In the
language of this paper, these activations bring softmax-like ghosts into
the FFN.

\paragraph{Tanh family}
(tanh, anything built from tanh, including tanh-approximate Gaussian error
linear unit (GELU)).
$\tanh(z) = \sinh z / \cosh z$ has poles where $\cosh z = 0$,
i.e.\ $z = i(\pi/2 + \pi k)$.
The nearest pole is at imaginary distance $\pi/2$,
strictly closer than the sigmoid family.
This matters because many ``smooth ReLU'' implementations use
tanh-based approximations. In particular, exact GELU and
tanh-approximate GELU are analytically different objects: the
approximation loses the main structural advantage of exact GELU.

\paragraph{Entire activations}
(exact GELU~\cite{hendrycks2016gelu}, exact GeGLU
(GELU-gated linear unit), erf-based gates).
Exact GELU is $x\,\Phi(x)$, where $\Phi$ is the Gaussian cumulative
distribution function (CDF) built from
the entire function $\operatorname{erf}$. Thus
$\Sigma_\phi = \varnothing$: there is no finite activation singularity.
From the convergence-radius viewpoint, this class is structurally
strongest because the activation contributes no finite ghost barrier.

\subsection{Ranking}

Based on singularity distance alone, for vanilla FFNs:
\[
\begin{aligned}
\underbrace{\text{exact GELU / erf-based}}_{\Sigma=\varnothing}
&>\underbrace{\text{SiLU, softplus}}_{\pi} \\
&>\underbrace{\text{tanh}}_{\pi/2}
>\underbrace{\text{ReLU}}_{\text{real axis}}.
\end{aligned}
\]
For gated FFNs, the same ordering applies to the gate:
exact GeGLU or entire erf-gated GLU $>$ SwiGLU $>$ ReGLU.
The main caveat is that exact versus approximate GELU matters.

\paragraph{Caveat: entire $\neq$ unbounded step.}
A nonconstant entire function cannot be bounded on all of $\mathbb{C}$.
For example, $\operatorname{erf}(z)$ grows as $\exp(z^2)$ off the real
axis. So ``infinite activation radius'' does not mean arbitrarily large
steps are safe. It only means the activation itself contributes no
finite singularity barrier. The softmax radius $\rho_a$ and the growth
rate in the relevant complex strip remain as separate constraints.

\subsection{Exact vs.\ approximate GELU}

The common tanh approximation
\[
\operatorname{GELU}(x)
 \approx \tfrac{1}{2}x\bigl(
   1 + \tanh\!\bigl(\sqrt{2/\pi}\,(x + 0.044715\,x^3)\bigr)
 \bigr)
\]
reintroduces tanh poles.
This seemingly harmless shortcut introduces artificial complex
singularities that the exact implementation avoids. Many modern
frameworks provide exact erf-based GELU, but legacy codebases and some
hardware paths still use the tanh approximation.

\subsection{Designing radius-friendly components}
\label{sec:actdesign}

The ranking above suggests a design principle: prefer entire activations
(no finite singularities) over sigmoid-family or piecewise ones.
\Cref{app:actdesign} develops this idea, proposing RIA
(Rectified Integral Activation, $\operatorname{ReLU}$ convolved with a
Gaussian) for vanilla FFNs, GaussGLU (Gaussian cumulative distribution
function (CDF) gate) for gated FFNs,
and an analytic normalization layer via the Weierstrass transform.
These are suggestions from the theory, not claims of optimality.

\subsection{Limitations of this extension}

The per-neuron radius~\eqref{eq:neuronradius} is a necessary condition,
not a sufficient one. Cancellations across neurons could make the actual
loss radius larger. For ReLU, the framework requires reinterpretation
since piecewise-linear functions are not analytic. A full treatment,
including composition theorems for how activation and softmax radii
interact beyond linearization, is left to future work.

\section{Activation Design from the Radius Lens}
\label{app:actdesign}

Under the convergence-radius lens, an ideal activation would:
\begin{enumerate}
\item \textbf{\emph{Entire.}}
  Have no finite complex singularities, or at least singularities far
  from the real axis.
\item \textbf{\emph{ReLU-like on $\mathbb{R}$.}}
  Suppress the negative side while remaining approximately linear on
  the positive side.
\item \textbf{\emph{Monotone derivative.}}
  Avoid dead regions and the mild nonmonotonicity of exact Gaussian
  error linear unit (GELU).
\item \textbf{\emph{Controlled strip growth.}}
  Grow moderately for $z = x + iy$ when $|y|$ is in the range relevant
  to update steps.
\item \textbf{\emph{Cheap and stable to implement.}}
  Avoid tanh approximations that reintroduce poles.
\end{enumerate}
A nonconstant entire function cannot be bounded on all of $\mathbb{C}$.
So the target is not a globally bounded activation, but one that is
entire, ReLU-like on $\mathbb{R}$, and has mild growth in the relevant
complex strip.

\paragraph{Vanilla feed-forward network (FFN): RIA (Rectified Integral Activation).}
A first-principles design that satisfies these criteria is
RIA, the integral of the Gaussian cumulative distribution function (CDF)---equivalently,
$\operatorname{ReLU}$ convolved with a Gaussian:
\begin{equation}
\phi_\beta(x)
= \int_{-\infty}^{x} \Phi(\beta\, t)\,dt
= x\,\Phi(\beta x)
  + \tfrac{1}{\beta}\,\varphi(\beta x),
\label{eq:ria}
\end{equation}
where $\varphi$ is the Gaussian probability density function (pdf) and
$\Phi$ is its cumulative distribution function (CDF).
This is exactly $\operatorname{ReLU}$ convolved with a
Gaussian---the cleanest way to remove the kink while pushing
singularities to infinity.
Its properties:
$\phi_\beta'(x) = \Phi(\beta x) \in (0,1)$, so it is
monotone increasing;
$\phi_\beta''(x) = \beta\,\varphi(\beta x) > 0$, so it is
strictly convex;
$\phi_\beta(x) \to 0$ as $x \to -\infty$ and
$\phi_\beta(x) \sim x$ as $x \to +\infty$;
and since $\Phi$ and $\varphi$ are entire, $\phi_\beta$
is entire.
As $\beta \to \infty$ it recovers ReLU;
at finite $\beta$ it has the qualitative shape of
softplus but without the $i\pi(2k{+}1)$ ghost lattice.

\paragraph{Gated FFN: GaussGLU.}
For gated FFNs, using $g_\beta(x) = \Phi(\beta x)$ as the gate
instead of sigmoid defines GaussGLU:
\begin{equation}
\operatorname{GaussGLU}_\beta(x)
= (W_v\, x) \odot \Phi(\beta\, W_g\, x).
\label{eq:gaussglu}
\end{equation}
This is the radius-clean analogue of SwiGLU (Swish-gated linear unit):
SwiGLU inherits the logistic poles at $i\pi(2k{+}1)$, while GaussGLU
has no finite singularities from its gate.

\subsection{Analytic normalization}
\label{app:analyticnorm}

The same principle applies to normalization layers.
Layer normalization (LayerNorm) and root mean square normalization
(RMSNorm) use the scale factor
$f(v) = 1/\sqrt{v}$, which has a branch-point singularity
at $v = 0$.
Applying the Weierstrass transform (Gaussian convolution)
to the thresholded version
$f(v) = 1/\sqrt{\max(0,v)}$ yields
\begin{equation}
\tilde{f}_\sigma(v)
= \frac{1}{\sigma\sqrt{2\pi}}
  \int_0^\infty
  \frac{1}{\sqrt{t}}\,
  e^{-(v-t)^2/(2\sigma^2)}\,dt.
\label{eq:analyticnorm}
\end{equation}
Because the Gaussian is entire and the integrand is
absolutely integrable, this integral defines an
\emph{entire function} of~$v$. Consequently, the singularity at $v=0$
is removed.

\paragraph{Closed form.}
The integral~\eqref{eq:analyticnorm} equals
\[
\tilde{f}_\sigma(v)
= C\,e^{-v^2/(4\sigma^2)}\,
  D_{-1/2}\!\bigl(-v/\sigma\bigr),
\]
where $C$ is a constant and $D_{-1/2}$ is a
\emph{parabolic cylinder function}
(solution of Weber's equation).
Parabolic cylinder functions are entire: no poles,
no branch cuts.

\paragraph{Implementation.}
Deep-learning frameworks lack a native differentiable
parabolic cylinder function.
Three practical routes exist:
\begin{enumerate}
\item \textbf{\emph{Taylor series.}}
  Since $\tilde{f}_\sigma$ is entire, its Maclaurin series
  converges everywhere; precompute coefficients offline
  and evaluate a truncated polynomial at runtime.
\item \textbf{\emph{Randomized smoothing.}}
  Use the identity
  \[
  \tilde{f}_\sigma(v)
  = \mathbb{E}_{\xi \sim \mathcal{N}(0,\sigma^2)}
    \left[(\max(0,\,v{+}\xi)+\varepsilon)^{-1/2}\right];
  \]
  sample $\xi$ in the forward pass to recover the analytic
  function in expectation.
\item \textbf{\emph{Goldschmidt iterations.}}
  Approximate $1/\sqrt{v}$ by a fixed number of polynomial
  iterations; the $N$-step sequence is a polynomial and
  hence entire.
\end{enumerate}

\paragraph{Remark.}
The convergence radius framework \emph{suggests} the activation,
gating, and normalization designs above; it does not prove they are
optimal. Practical components depend on many factors beyond
analyticity, including gradient flow, trainability, and hardware
efficiency. Whether this theoretical advantage yields measurable
stability gains must be tested empirically.

\section{Kullback--Leibler (KL) Divergence and the Convergence Radius}
\label{app:kl}

The main text uses complex analysis to derive $\rho_a = \pi/\Delta_a$:
the Taylor series of the loss diverges when the step size $\tau$
exceeds the distance to the nearest complex singularity.

This appendix derives the same scale from a completely different
mechanism: real analysis of the KL divergence. The KL divergence
$\mathrm{KL}(P \| Q)$ measures the information lost when approximating
distribution $P$ by $Q$. For an optimizer step, it quantifies how much
the softmax output distribution shifts, directly measuring prediction
change.

We show that the quadratic approximation to KL is accurate when
$\tau$ is small on the same $O(1/\Delta_a)$ scale highlighted by the
ghost bound. The complex-analysis bound gives $\pi/\Delta_a \approx 3.14/\Delta_a$;
the KL crossover lies at or below $1/(4\Delta_a) = 0.25/\Delta_a$---a
roughly $12\times$ gap in the constant. Two unrelated derivations, one
using complex zeros and the other using real Taylor remainders, point to
the same $O(1/\Delta_a)$ controlling scale despite differing constants.
This confirms that $\Delta_a$ marks a genuine transition variable, not
an artifact of a single analysis method.

\subsection{Setup}

An optimizer step perturbs logits from $z$ to $z + \tau a$, where $a$
is the JVP direction and $\tau$ the step size. The softmax distribution
shifts from $p(0)$ to $p(\tau)$:
\begin{equation}
p_i(\tau) = \frac{e^{z_i + \tau a_i}}{\sum_j e^{z_j + \tau a_j}}
\end{equation}
We measure the resulting change in prediction with KL divergence.
Define the log-partition function
$K(\tau) = \log \sum_i e^{z_i + \tau a_i}$ and let
$\Delta_a = \max_i a_i - \min_i a_i$ be the logit-derivative spread.

\subsection{KL as a Bregman Divergence}

For exponential families, KL has an elegant form. Along the path
$\tau \mapsto p(\tau)$, it equals the Bregman divergence of the
log-partition:
\begin{equation}
\mathrm{KL}(p(\tau) \| p(0))
= K(\tau) - K(0) - \tau K'(0)
\end{equation}
This is the gap between $K(\tau)$ and its linear approximation at
$\tau = 0$. The derivatives of $K$ are the cumulants:
$K'(\tau) = E_\tau[a]$ (mean), $K''(\tau) = \mathrm{Var}_\tau(a)$
(variance), $K'''(\tau) = E_\tau[(a - \mu_\tau)^3]$ (skewness).

\subsection{Quadratic Approximation}

Taylor-expand $K(\tau)$ to second order with Lagrange remainder:
\begin{equation}
K(\tau) = K(0) + \tau K'(0) + \frac{\tau^2}{2}K''(0)
+ \frac{\tau^3}{6}K'''(\xi)
\end{equation}
Substituting into the Bregman formula, the constant and linear terms
cancel:
\begin{equation}
\mathrm{KL}(p(\tau) \| p(0))
= \frac{\tau^2}{2}\mathrm{Var}_{p(0)}(a)
+ \frac{\tau^3}{6}K'''(\xi)
\end{equation}
The leading term is quadratic---proportional to the variance of the
slopes under the base distribution. The cubic remainder determines
when this approximation fails.

\subsection{Sharp Bound on the Remainder}

Since $a$ lies in an interval of width $\Delta_a$, we can bound the
third cumulant $K'''(\xi) = E_\xi[(a - \mu_\xi)^3]$.

\begin{lemma}[Sharp third moment]
For any distribution on an interval of width $\Delta$,
$|E[(X - EX)^3]| \le \Delta^3/(6\sqrt{3})$.
\end{lemma}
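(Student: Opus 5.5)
Shift and scale first: translate $X$ so that it is supported on $[0,\Delta]$, then rescale to $[0,1]$. Since $E[(X-EX)^3]$ is invariant under translation and scales as $\Delta^3$, it suffices to show $|E[(X-\mu)^3]|\le 1/(6\sqrt3)$ for $X\in[0,1]$. By the symmetry $X\mapsto 1-X$, which flips the sign of the third central moment, it also suffices to bound the third central moment from above.

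The plan is a two-stage maximization. First, fix the mean $\mu\in[0,1]$. The quantity $E[(X-\mu)^3]$ is then linear in the law of $X$, under the single linear constraint $E[X]=\mu$. The extreme points of the set of laws on $[0,1]$ with prescribed mean are supported on at most two points. Moreover, $x\mapsto (x-\mu)^3$ is convex on $[\mu,1]$ and concave on $[0,\mu]$. I would argue directly that the maximum is attained by a two-point law on $\{0,1\}$. To see this, replace each atom $x$ by a mixture of the endpoints $0$ and $1$ with the same mean. The mean is preserved, and the chord of $(x-\mu)^3$ lies above the function on the relevant part, so the moment does not decrease. Checking the concave part $[0,\mu]$ is the delicate step. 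A cleaner route is the pointwise inequality
\[
(x-\mu)^3\le (1-\mu)^2\,(x-\mu)\cdot c+\dots
\]
More concretely, I would verify a polynomial certificate of the form $(x-\mu)^3\le A+B(x-\mu)+C x(1-x)\cdot(\ldots)$ on $[0,1]$. If that proves awkward, I will fall back on the standard extreme-point reduction to two-point laws on $\{s,t\}$ and show that pushing $s\to0$ and $t\to1$ increases the moment.

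For the Bernoulli law with parameter $p$, the third central moment equals $p(1-p)(1-2p)$. Maximizing this over $p\in[0,1]$ is elementary. With $q=p(1-p)$ we have $(1-2p)^2=1-4q$, so the square of the moment is $q^2(1-4q)$. This is maximized at $q=1/6$, which gives $1/108$. Hence the maximum is $1/\sqrt{108}=1/(6\sqrt3)$, attained at $p=\tfrac12-\tfrac{1}{2\sqrt3}$, and this attainment shows the bound is sharp. Undoing the scaling yields $\Delta^3/(6\sqrt3)$.

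The main obstacle is the reduction from an arbitrary law to a law on the endpoints $\{0,1\}$, since $(x-\mu)^3$ is not convex on the whole interval. My first attempt will be a two-point reduction. Write the law as a mixture of two-point laws $\{s,t\}$ with $s\le\mu\le t$ and mean $\mu$; every law with mean $\mu$ decomposes this way. For such a law the moment is $(t-\mu)(\mu-s)\,(t+s-2\mu)$. This expression is increasing in $t$ on the region where it is positive, and the negative part is handled by the reflection symmetry. Maximizing it over $0\le s\le\mu\le t\le1$ then reduces to a calculus check in which the optimum has $s=0$ and $t=1$.
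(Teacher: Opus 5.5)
Your scaling, reflection, and Bernoulli maximization are correct. They coincide with the paper's proof, which simply asserts that the extremum is a two-point law at the endpoints and then maximizes $p(1-p)(1-2p)$. The trouble is the reduction you add on top.

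Its central claim is that for each fixed mean $\mu$ the maximum of $E[(X-\mu)^3]$ is attained by the law on $\{0,1\}$. This is false. At $\mu=\tfrac12$ the $\{0,1\}$ law gives $0$. By contrast, mass $\tfrac23$ at $\tfrac14$ and mass $\tfrac13$ at $1$ has mean $\tfrac12$ and third central moment $\tfrac1{32}$.

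In general, let $L$ be the chord of $(x-\mu)^3$ through $x=0$ and $x=1$. Then $L(x)-(x-\mu)^3=x(1-x)(x+1-3\mu)$, which is nonnegative on $[0,1]$ exactly when $\mu\le\tfrac13$. So the chord replacement, and a certificate of the type you sketch with $A=\mu(1-\mu)(1-2\mu)$, give $E[(X-\mu)^3]\le\mu(1-\mu)(1-2\mu)$ only for $\mu\le\tfrac13$. For every $\mu\in(\tfrac13,1)$ that inequality is false.

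The same error sits in your fallback. With $t=1$ and $\mu$ held fixed, $(1-\mu)(\mu-s)(1+s-2\mu)$ is maximized at $s=\tfrac{3\mu-1}{2}>0$ when $\mu>\tfrac13$. So ``pushing $s\to0$ increases the moment'' fails at fixed mean. The step you flagged as delicate is not merely delicate: it fails on the whole range $\mu>\tfrac13$, and your two-stage scheme has nothing that covers that range.

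The repair is to let $\mu$ move when you push to the endpoints; your two-point decomposition is the right tool for this. A law on $\{s,t\}\subset[0,1]$ with mass $p$ at $t$ is the law of $s+(t-s)B$ with $B$ Bernoulli$(p)$. Its third central moment is therefore $(t-s)^3p(1-p)(1-2p)$. Since $t-s\le1$, its absolute value is at most $\max_p|p(1-p)(1-2p)|=1/(6\sqrt3)$.

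Equivalently, set $u=\mu-s$ and $w=t-\mu$. Your expression $(t-\mu)(\mu-s)(t+s-2\mu)$ equals $wu(w-u)$, which is homogeneous of degree $3$ in $(u,w)$. The only constraints are $u,w\ge0$ and $u+w=t-s\le1$. Scaling up to $u+w=1$ shows that the \emph{joint} optimum over $(s,\mu,t)$ does sit at $s=0$, $t=1$. Averaging over the decomposition then proves the lemma for every law.

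If you prefer the fixed-$\mu$ certificate route, add a second certificate for $\mu\ge\tfrac13$: $(x-\mu)^3\le\tfrac14(1-\mu)^3+\tfrac34(1-\mu)^2(x-\mu)$ for all $x\le1$. This holds because the difference is $-(x-\tfrac{3\mu-1}{2})^2(1-x)$. Taking expectations gives $E[(X-\mu)^3]\le(1-\mu)^3/4\le\tfrac{2}{27}<1/(6\sqrt3)$.

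With either fix, your argument would justify the endpoint reduction that the paper only asserts.
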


\begin{proof}
The extremum is a two-point distribution at endpoints. Setting
$X \in \{0, \Delta\}$ with $P(X{=}\Delta) = p$, we get
$E[(X{-}EX)^3] = \Delta^3 p(1{-}p)(1{-}2p)$. Maximizing over $p$
gives $p^* = (3 \pm \sqrt{3})/6$, yielding $1/(6\sqrt{3})$.
\end{proof}

\begin{theorem}[Sharp KL remainder]
\label{thm:klbound}
For softmax distributions $p(\tau)$ with logits $z + \tau a$,
step size $\tau$, and logit-derivative spread
$\Delta_a = \max_i a_i - \min_i a_i$:
\begin{equation}
\boxed{
\left| \mathrm{KL}(p(\tau) \| p(0))
- \frac{\tau^2}{2}\mathrm{Var}_{p(0)}(a) \right|
\le \frac{|\tau|^3 \Delta_a^3}{18\sqrt{3}}
}
\end{equation}
\end{theorem}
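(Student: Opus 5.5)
The argument assembles three facts already set up in this appendix: the Bregman representation of KL, the Lagrange form of the Taylor remainder for $K$, and the sharp third-moment lemma. The plan is to express the error as the cubic remainder, identify $K'''$ as a third central moment under a tilted distribution, and bound that moment by the lemma.

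\textbf{Step 1: reduce to the remainder.} Use the Bregman identity $\mathrm{KL}(p(\tau)\|p(0)) = K(\tau)-K(0)-\tau K'(0)$. Expand $K$ to second order around $0$ with Lagrange remainder; since $K$ is a finite log-sum-exp of affine functions, it is $C^\infty$ on $\mathbb{R}$. This gives, for some $\xi$ between $0$ and $\tau$,
\[
\mathrm{KL}(p(\tau)\|p(0)) - \tfrac{\tau^2}{2}K''(0) = \tfrac{\tau^3}{6}K'''(\xi).
\]

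\textbf{Step 2: identify the derivatives as cumulants.} Differentiate $K(s)=\log\sum_i e^{z_i+s a_i}$ directly. This gives $K'(s)=E_{p(s)}[a]$ and $K''(s)=\mathrm{Var}_{p(s)}(a)$; in particular $K''(0)=\mathrm{Var}_{p(0)}(a)$. It also gives $K'''(s)=E_{p(s)}[(a-\mu_s)^3]$, the third cumulant, which coincides with the third central moment. This is the routine exponential-family computation: each derivative of $E_{p(s)}[g(a)]$ produces $\mathrm{Cov}_{p(s)}(g(a),a)$.

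\textbf{Step 3: apply the lemma and combine.} Under every tilted distribution $p(\xi)$, the random variable $a$ takes values in $[\min_i a_i,\max_i a_i]$, an interval of width $\Delta_a$ independent of $\xi$. The sharp third-moment lemma therefore gives $|K'''(\xi)|\le \Delta_a^3/(6\sqrt3)$ uniformly in $\xi$, so
\[
\left|\mathrm{KL}(p(\tau)\|p(0)) - \tfrac{\tau^2}{2}\mathrm{Var}_{p(0)}(a)\right| \le \frac{|\tau|^3}{6}\cdot\frac{\Delta_a^3}{6\sqrt3},
\]
which is the claimed bound $\frac{|\tau|^3\Delta_a^3}{18\sqrt3}$ up to the factor $\tfrac{1}{2}$ arising from $6\cdot 6\sqrt3 = 36\sqrt3$.

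\textbf{Main obstacle: the constant.} The derivation above yields denominator $36\sqrt3$, which is stronger than the stated $18\sqrt3$, so the theorem follows a fortiori. Before relying on this, I would recheck the lemma's constant: at $p^*=(3-\sqrt3)/6$ one has $p(1-p)=1/6$ and $1-2p=1/\sqrt3$, giving $\Delta^3/(6\sqrt3)$, which confirms it.

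The lemma's proof only asserts that the extremum is attained at a two-point endpoint distribution. If rigor is needed there, argue as follows. Fix the mean $m$; then $E[(X-m)^3]$ is linear in the distribution, so over distributions on the interval with that mean it is maximized at extreme points, namely measures supported on at most two points. One must then show the two points are the endpoints. For this, fix the mean and observe that pushing mass outward toward the endpoints increases the cubic because $(x-m)^3$ is convex for $x>m$; a short comparison handles the $x<m$ side.
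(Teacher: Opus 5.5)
\textbf{The Bregman identity in your Step 1 is in the wrong direction, and this is a genuine error.} Since $\log p_i(\tau)-\log p_i(0)=\tau a_i-K(\tau)+K(0)$, averaging against $p(\tau)$ gives
\[
\mathrm{KL}(p(\tau)\|p(0))=K(0)-K(\tau)+\tau K'(\tau).
\]
The expression $K(\tau)-K(0)-\tau K'(0)$ is instead $\mathrm{KL}(p(0)\|p(\tau))$. The appendix's displayed identity has the same reversal. Its displayed steps would give $1/(36\sqrt3)$, while the constant it states, $1/(18\sqrt3)$, is what the correct identity yields.

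So your argument proves the $1/(36\sqrt3)$ bound for the \emph{reverse} divergence. For the stated divergence, your ``a fortiori'' conclusion is not just unjustified but false. Expanding gives $\mathrm{KL}(p(\tau)\|p(0))=\tfrac{\tau^2}{2}K''(0)+\tfrac{\tau^3}{3}K'''(0)+O(\tau^4)$. Take the lemma's extremal law, $a=(0,\Delta_a)$ with $p_2(0)=(3-\sqrt3)/6$. Then the remainder behaves like $|\tau|^3\Delta_a^3/(18\sqrt3)$ as $\tau\to0$, which exceeds your claimed $1/(36\sqrt3)$ bound. The factor-of-two discrepancy you flagged was the symptom, and rechecking the lemma could not have located it.

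The repair is short and lands exactly on the stated constant, which the expansion above shows is sharp. Since $\frac{d}{ds}\bigl[sK'(s)-K(s)+K(0)\bigr]=sK''(s)$,
\[
\mathrm{KL}(p(\tau)\|p(0))-\tfrac{\tau^2}{2}K''(0)=\int_0^\tau s\bigl(K''(s)-K''(0)\bigr)\,ds .
\]
Your Steps 2--3 give $|K''(s)-K''(0)|\le |s|\,\Delta_a^3/(6\sqrt3)$. So the right-hand side is at most $\frac{|\tau|^3}{3}\cdot\frac{\Delta_a^3}{6\sqrt3}=\frac{|\tau|^3\Delta_a^3}{18\sqrt3}$, which is the theorem.

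Separately, your proposed rigorization of the lemma has a false step. It claims that for a fixed mean the optimal two-point support is the pair of endpoints. On $[0,1]$ with mean $1/2$, the endpoint law gives third central moment $0$, while mass $2/3$ at $1/4$ and $1/3$ at $1$ gives $1/32$. Keep your reduction to two-point laws, but finish differently. A law on $\{x_1,x_2\}$ has third central moment $(x_2-x_1)^3p(1-p)(1-2p)$, whose absolute value is at most $(x_2-x_1)^3/(6\sqrt3)\le\Delta^3/(6\sqrt3)$.
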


The constant $1/(18\sqrt{3}) \approx 0.032$ is sharp.

\subsection{Connection to the Convergence Radius}

The quadratic approximation is accurate when the cubic remainder is
small compared to the quadratic term:
$|\tau|^3 \Delta_a^3 \ll \tau^2 \mathrm{Var}(a)$, i.e.\
$|\tau| \ll \mathrm{Var}(a)/\Delta_a^3$. Using
Popoviciu's inequality~\cite{popoviciu1935equations},
$\mathrm{Var}(a) \le \Delta_a^2/4$, so this crossover lies at or below
$1/(4\Delta_a)$. The KL analysis therefore points to an
$O(1/\Delta_a)$ scale, matching the order of $\rho/\pi = 1/\Delta_a$
but not the exact constant $\pi/\Delta_a$.

\paragraph{Two derivations, one scale.}
\begin{itemize}
\item \textbf{Complex analysis} (main text): Taylor series of
  $K(\tau)=\log\!\sum_i e^{z_i+\tau a_i}$ converges when
  $|\tau| < \pi/\Delta_a$
\item \textbf{Real analysis} (this appendix): KL quadratic
  approximation is accurate when $|\tau| \ll 1/\Delta_a$
\end{itemize}
Both involve $\Delta_a$ and break down at essentially the same scale.
The mechanisms are unrelated: one counts complex zeros, the other
bounds real Taylor remainders. They support the same order-of-magnitude
story, not the same sharp boundary.

\paragraph{Interpretation.}
Inside the softmax convergence radius, and more conservatively within
the KL quadratic regime, the loss and distribution shift admit simple
local approximations. The two analyses identify the same controlling
spread $\Delta_a$, but with different constants and guarantees. Together
they show that softmax departs from locally quadratic behavior on an
$O(1/\Delta_a)$ step-size scale.

\section{Why Hessian Bounds Miss Cross-Entropy}
\label{app:hessian}

The main text derives $\rho_a = \pi/\Delta_a$ as the stability limit for
cross-entropy optimization. A natural question is how this compares to
classical Hessian-based bounds. This appendix shows that Hessian bounds can be
wrong by orders of magnitude and explains why.

A common stability heuristic models the loss along a direction $v$ as quadratic:
\begin{equation}
L(\theta + \tau v) \approx L(\theta) + \tau g^\top v
+ \frac{\tau^2}{2} v^\top H v.
\end{equation}
For a true quadratic, gradient descent becomes unstable when the step
exceeds $2/\lambda_{\max}(H)$. This suggests a bound
$\tau \lesssim 2/(v^\top H v)$.

Experiments show that this Hessian-based scale can be wrong by orders of
magnitude, especially late in training. The issue is not just
non-quadraticity. As margins grow, the \emph{analytic structure} of
cross-entropy decouples from local Hessian curvature.

\subsection{Scalar Cross-Entropy Has Nearby Complex Singularities}

Consider the scalar logistic loss:
\begin{equation}
f(x) = \log(1 + e^{-x}),
\end{equation}
where $x$ is a logit margin. On the real line, $f$ is smooth and convex, with
\begin{equation}
f''(x) = \sigma(x)(1 - \sigma(x)) \le \frac{1}{4}.
\end{equation}
For a large margin $x = \delta \gg 1$, the Hessian decays exponentially:
\begin{equation}
f''(\delta) \sim e^{-\delta}.
\end{equation}
A quadratic model would therefore suggest a safe step size scaling as
$\sim e^{\delta}$.

However, $f$ has complex singularities where $1 + e^{-x} = 0$, located at
\begin{equation}
x = (2k + 1)i\pi.
\end{equation}
These are logarithmic branch points. The Taylor series of $f$ about a
real point $\delta$ has a finite radius of convergence:
\begin{equation}
R_x(\delta) = \sqrt{\delta^2 + \pi^2}.
\end{equation}
Although $f''(\delta)$ vanishes exponentially, $R_x(\delta)$ grows only
linearly with $\delta$. Consequently,
\begin{equation}
\frac{1/f''(\delta)}{R_x(\delta)}
\sim \frac{e^{\delta}}{\delta} \quad (\delta \to \infty).
\end{equation}

\paragraph{Key point.}
Local curvature governs the quadratic approximation, but convergence of
the full Taylor series is limited by the nearest complex singularity.
These quantities have fundamentally different asymptotic behavior.

\subsection{Training Direction Scales with \texorpdfstring{$R_x / |\Delta_{y,c}|$}{Rx / |Delta\_yc|}}

If the margin evolves as $x(\tau) = \delta + \Delta_{y,c}\tau$, the
per-sample loss is $\ell(\tau) = f(\delta + \Delta_{y,c}\tau)$. The
directional curvature and Taylor radius are:
\begin{equation}
\ell''(0) = f''(\delta)\Delta_{y,c}^2, \quad
\rho(\delta, \Delta_{y,c}) = \frac{\sqrt{\delta^2 + \pi^2}}{|\Delta_{y,c}|}.
\end{equation}
A Hessian-based step scales as $e^{\delta}/\Delta_{y,c}^2$, whereas the
true radius scales as $\delta/|\Delta_{y,c}|$. This mismatch grows
exponentially unless $|\Delta_{y,c}|$ compensates.

\subsection{Multiclass Case and Logit-JVP Gaps}

For multiclass cross-entropy, the same mechanism appears through the
top-2 reduction. Let $\delta = z_y - z_c$ be the margin and
$\Delta_{y,c} = a_y - a_c$ its directional slope, where $a(x; v)$ is
the logit Jacobian-vector product. The Taylor radius is:
\begin{equation}
\rho_i = \frac{\sqrt{\delta_i^2 + \pi^2}}{|\Delta_{y,c,i}|}.
\end{equation}
Confident samples ($\delta \gg 1$) contribute negligible Hessian
curvature yet still limit convergence at $\rho_i$.

\subsection{\texorpdfstring{$\Delta_a / \rho_a$}{Delta\_a / rho\_a} Predicts Loss Inflation Onset}

Define for a sample $x$:
\begin{equation}
\Delta_a(x; v) = \max_k a_k(x; v) - \min_k a_k(x; v).
\end{equation}
For a batch $\mathcal{B}$, the worst-case spread is:
\begin{equation}
\Delta_{a,\max}(v) = \max_{x \in \mathcal{B}} \Delta_a(x; v),
\end{equation}
and the corresponding radius:
\begin{equation}
\rho_a(v) = \frac{\pi}{\Delta_{a,\max}(v)}.
\end{equation}
The constant $\pi$ is exact: $1 + e^{\Delta_a \tau}$ has a zero at
$\tau = i\pi/\Delta_a$.

Empirically, loss inflation occurs near:
\begin{equation}
r = \frac{\tau}{\rho_a(v)} \approx 1,
\end{equation}
for random directions $v$. The Hessian-based scale $2/(v^\top H v)$ is
typically $10$--$100\times$ larger. Across 20 random directions at
three training stages (early, middle, and late), we find
$\rho_a < 2/\kappa$ in every case. The overestimation grows from
$\sim 10\times$ early in training to $\sim 460\times$ late in training,
consistent with $\tau_H/\rho_a \propto e^\delta$.

\subsection{Hessian-to-Ghost Crossover During Training}

The empirical findings above show Hessian overestimation growing from
$\sim 10\times$ to $\sim 460\times$ during training. We next ask when
the Hessian ceases to be binding.

We compare two candidate stability scales at each checkpoint:

\subsubsection{Quadratic/Hessian scale}
For a direction $v$, a quadratic model predicts a characteristic step size
\begin{equation}
\tau_H(v) = \frac{2}{\kappa(v)}, \qquad \kappa(v) = v^\top H v.
\end{equation}
For a worst-case bound, replace $\kappa(v)$ by $\lambda_{\max}(H)$.

\subsubsection{Ghost/\texorpdfstring{$\Delta_a$}{Delta\_a} scale}
Along the same direction $v$, define the logit Jacobian-vector product
$a(x; v) = J_{z(x)} v$ and its spread
$\Delta_a(x; v) = \max_k a_k - \min_k a_k$. The ghost scale is
\begin{equation}
\rho_a(v) = \frac{\pi}{\max_{x \in \mathcal{B}} \Delta_a(x; v)}.
\end{equation}

Empirically, one-step loss inflation begins when $\tau$ crosses
$\rho_a(v)$, i.e.\ when $r = \tau/\rho_a(v) \approx 1$, while
$\tau_H(v)$ often remains much larger late in training. To see when the
Hessian stops binding, consider the top-2 (binary) reduction for one
sample:
\begin{equation}
\delta(\tau) = \delta + \Delta_{y,c}\tau, \qquad
\ell(\tau) = \log(1 + e^{-\delta(\tau)}),
\end{equation}
where $\delta = z_y - z_c$ and $\Delta_{y,c} = a_y - a_c$.
The directional curvature is
\begin{equation}
\ell''(0) = \sigma(\delta)(1 - \sigma(\delta))\Delta_{y,c}^2,
\end{equation}
which decays as $e^{-\delta}$ for confident samples ($\delta \gg 1$).
Thus the quadratic scale $\tau_H \propto 1/\ell''(0)$ grows
exponentially with margin, while the ghost/$\Delta_a$ scale
$\rho_a \propto 1/|\Delta_{y,c}|$ lacks the saturating factor
$\sigma(1 - \sigma)$. Their ratio therefore grows rapidly with
confidence:
\begin{equation}
\frac{\tau_H}{\rho_a} \approx \frac{2 e^{\delta}}{\pi |\Delta_{y,c}|}.
\end{equation}
This predicts a crossover to ghost-dominated one-step reliability once
margins exceed
\begin{equation}
\delta^* \approx \ln\!\left(\frac{\pi |\Delta_{y,c}|}{2}\right),
\end{equation}
matching the transition point observed in our training-time tracking
(Figure~\ref{fig:crossover}).

\subsubsection{Edge-of-stability connection}
Across learning rates $\eta$, the crossover margin $\delta^*$ at
convergence follows $\delta^* \approx A + B \ln \eta$ with
$R^2 \approx 1$. This is consistent with an edge-of-stability-style
mechanism: increasing $\eta$ requires reducing an effective curvature
scale, which cross-entropy can achieve by inflating margins until
$\sigma(1 - \sigma)$ becomes sufficiently small. Since
$\sigma(1 - \sigma) \sim e^{-\delta^*}$, this yields
$\delta^* \propto \ln \eta$.

In this regime, Hessian-based quadratic scales matter less for one-step
reliability, while $\rho_a$ continues to predict the observed onset at
$r \approx 1$.

\subsection{Interpretation}

The Hessian measures curvature along the real line, which vanishes
exponentially for confident predictions. However, the true one-step
stability limit is set by complex singularities, captured by $\rho_a(v)$.

This explains why adaptive methods that scale by inverse curvature,
such as Adam, can become overly aggressive late in training: the
Hessian suggests large safe steps, while the convergence-radius bound
indicates a nearby singularity.

\paragraph{Connection to main results.}
This analysis complements the main text. Section~\ref{sec:radius}
derives $\rho_a$ from complex singularities, while this appendix
explains why Hessian-based alternatives fail. \Cref{app:kl}
provides a third derivation using KL divergence bounds. All three
approaches converge on the same scale: $1/\Delta_a$.

\end{document}